\def\eqref#1{eq.~\ref{#1}}
\def\1{\bm{1}}
\def\vt{{\bm{t}}}
\def\vu{{\bm{u}}}
\def\vx{{\bm{x}}}
\def\vz{{\bm{z}}}
\def\mI{{\bm{I}}}
\def\mX{{\bm{X}}}
\DeclareMathAlphabet{\mathsfit}{\encodingdefault}{\sfdefault}{m}{sl}
\SetMathAlphabet{\mathsfit}{bold}{\encodingdefault}{\sfdefault}{bx}{n}
\def\gL{{\mathcal{L}}}
\def\gN{{\mathcal{N}}}
\def\gU{{\mathcal{U}}}
\newcommand{\E}{\mathbb{E}}
\newcommand{\R}{\mathbb{R}}
\newcommand{\KL}{D_{\mathrm{KL}}}
\newcommand{\normltwo}{L^2}
\DeclareMathOperator*{\argmin}{arg\,min}
\theoremstyle{plain}
\newtheorem{theorem}{Theorem}[section]
\newtheorem{proposition}[theorem]{Proposition}
\theoremstyle{definition}
\theoremstyle{remark}
\renewcommand*{\sectionautorefname}{\S\@gobble}
\renewcommand*{\subsectionautorefname}{\S\@gobble}
\renewcommand*{\subsubsectionautorefname}{\S\@gobble}
\newcommand{\ie}{\textit{i.e.}}
\newcommand{\eg}{\textit{e.g.}}
\newcommand{\LFM}[0]{\gL_{\rm FM}}
\newcommand{\LCFM}[0]{\gL_{\rm CFM}}
\newcommand{\LECFM}[0]{\gL_{\rm ECFM}}
\newcommand{\ds}[1]{{\sffamily {#1}}}
\newcommand{\std}[1]{\scriptsize\rm$\pm$#1}
\title{Improving and Generalizing Flow-Based Generative Models with Minibatch Optimal Transport}
\author{\name Alexander Tong$^*$ \email alexander.tong@mila.quebec\\
    \addr Mila -- Qu\'ebec AI Institute, Universit\'e de Montr\'eal 
    \AND
    \name Kilian Fatras$^*$ \email kilian.fatras@mila.quebec \\
    \addr Mila -- Qu\'ebec AI Institute, McGill University
    \AND
    \name Nikolay Malkin$^*$ \email nikolay.malkin@mila.quebec\\
    \addr Mila -- Qu\'ebec AI Institute, Universit\'e de Montr\'eal 
    \AND
    \name Guillaume Huguet \email guillaume.huguet@mila.quebec\\
    \addr Mila -- Qu\'ebec AI Institute, Universit\'e de Montr\'eal 
    \AND
    \name Yanlei Zhang \email yanlei.zhang@mila.quebec\\
    \addr Mila -- Qu\'ebec AI Institute, Universit\'e de Montr\'eal 
    \AND
    \name Jarrid Rector-Brooks \email jarrid.rector-brooks@mila.quebec \\
    \addr Mila -- Qu\'ebec AI Institute, Universit\'e de Montr\'eal  
    \AND
    \name Guy Wolf \email guy.wolf@umontreal.ca \\
    \addr Mila -- Qu\'ebec AI Institute, Universit\'e de Montr\'eal\\
    Canada CIFAR AI Chair
    \AND
    \name Yoshua Bengio \email yoshua.bengio@mila.quebec \\
    \addr Mila -- Qu\'ebec AI Institute, Universit\'e de Montr\'eal \\ CIFAR Senior Fellow
}
\begin{document}

\maketitle

\begin{abstract}
Continuous normalizing flows (CNFs) are an attractive generative modeling technique, but they have been held back by limitations in their simulation-based maximum likelihood training. We introduce the generalized \textit{conditional flow matching} (CFM) technique, a family of simulation-free training objectives for CNFs. CFM features a stable regression objective like that used to train the stochastic flow in diffusion models but enjoys the efficient inference of deterministic flow models. In contrast to both diffusion models and prior CNF training algorithms, CFM does not require the source distribution to be Gaussian or require evaluation of its density. A variant of our objective is \textit{optimal transport CFM} (OT-CFM), which creates simpler flows that are more stable to train and lead to faster inference, as evaluated in our experiments. {Furthermore, we show that when the true OT plan is available, our OT-CFM method approximates dynamic OT}. Training CNFs with CFM improves results on a variety of conditional and unconditional generation tasks, such as inferring single cell dynamics, unsupervised image translation, and Schr\"odinger bridge inference. The Python code is available at \url{https://github.com/atong01/conditional-flow-matching}.
\end{abstract}

\def\thefootnote{*}\footnotetext{Equal contribution}\def\thefootnote{\arabic{footnote}}

\section{Introduction}

Generative modeling considers the problem of approximating and sampling from a probability distribution. Normalizing flows, which have emerged as a competitive generative modeling method, construct an invertible and efficiently differentiable mapping between a fixed (\eg, standard normal) distribution and the data distribution \citep{rezende_variational_2015}. While original normalizing flow work specified this mapping as a static composition of invertible modules, continuous normalizing flows (CNFs) express the mapping by a neural ordinary differential equation (ODE) \citep{chen_neural_2018}. Unfortunately, CNFs have been held back by difficulties in training and scaling to large datasets~\citep{chen_neural_2018,grathwohl_ffjord:_2019,onken_ot-flow_2021}.

Meanwhile, diffusion models, which are the current state of the art on many generative modeling tasks \citep{dhariwal_diffusion_2021,austin_structured_2021,corso_diffdock_2022,watson_broadly_2022}, approximate a \emph{stochastic} differential equation (SDE) that transforms a simple density to the data distribution. Diffusion models owe their success in part to their simple regression training objective, which does not require simulating the SDE during training. Recently, \citep{lipman_flow_2022, albergo_building_2023, liu_rectified_2022} showed that CNFs could also be trained using a regression of the ODE's drift similar to training of diffusion models, an objective called \textit{flow matching} (FM). FM was shown to produce high-quality samples and stabilize CNF training, but made the assumption of a Gaussian source distribution, which was later relaxed in generalizations of FM to more general manifolds \citep{chen2023riemannian}, arbitrary sources \citep{pooladian_2023_multisample}, and couplings between source and target samples that are either part of the input data or are inferred using optimal transport.
The \textbf{first main contribution} of the present paper is to propose a unifying \emph{conditional flow matching} (CFM) framework for FM models with arbitrary transport maps, generalizing existing FM and diffusion modeling approaches (\autoref{tab:prob_path_summary}). 

A major drawback of both CNF (ODE) and diffusion (SDE) models compared to other generative models (\eg, variational autoencoders~\citep{kingma_auto-encoding_2014}, (discrete-time) normalizing flows, and generative adversarial networks~\citep{goodfellow_generative_2014}), is that integration of the ODE or SDE requires many passes through the network to generate a high-quality sample, resulting in a long inference time. This drawback has motivated work on enforcing an optimal transport (OT) property in neural ODEs \citep{tong_trajectorynet_2020,finlay_how_2020,onken_ot-flow_2021,liu_rectified_2022,liu_flow_2023}, yielding straighter flows that can be integrated accurately in fewer neural network evaluations. Such regularizations have not yet been studied for the full generality of models trained with FM-like objectives, and their properties with regard to solving the dynamic optimal transport problem were not empirically evaluated.
Our \textbf{second main contribution} is a variant of CFM called \emph{optimal transport conditional flow matching} (OT-CFM) that \emph{approximates dynamic OT via CNFs}.
We show that OT-CFM not only improves the efficiency of training and inference, but also leads to more accurate OT flows than existing neural OT models based on ODEs~\citep{tong_trajectorynet_2020,finlay_how_2020}, SDEs~\citep{de_bortoli_diffusion_2021,vargas_solving_2021}, or input-convex neural networks~\citep{makkuva_optimal_2020}. 
Furthermore, an entropic variant of OT-CFM can be used to efficiently train a CNF to match the probability flow of a Schr\"odinger bridge. 
{\textbf{We show that in the case where the true transport plan is sampleable, our methods approximate the dynamic OT maps and Schr\"odinger bridge probability flows for arbitrary source and target distributions with simulation-free training.}}

\begin{figure}[t]
     \centering
     \includegraphics[width=0.56\linewidth]{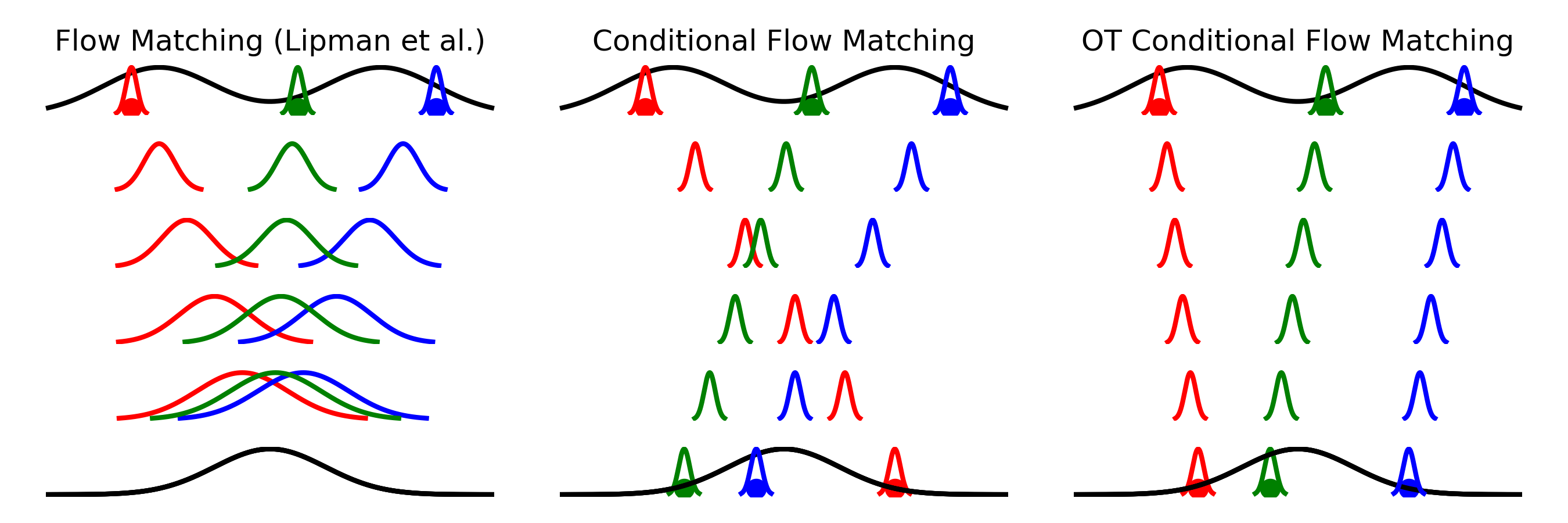}
     \includegraphics[width=0.21\linewidth]{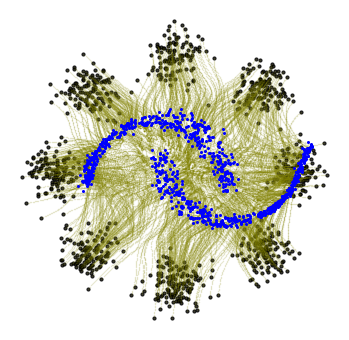}
     \includegraphics[width=0.21\linewidth]{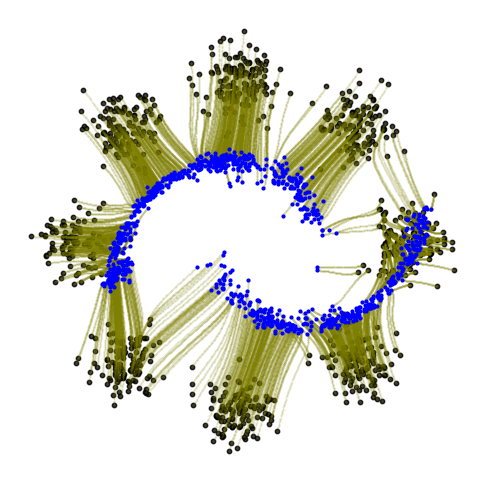}
    \caption{\textbf{Left:} Conditional flows from FM \citep{lipman_flow_2022}, I-CFM (\autoref{sec:pair_conditional_fm}), and OT-CFM (\autoref{sec:ot_conditional_fm}). \textbf{Right:} Learned flows (green) from \ds{moons} (blue) to \ds{8gaussians} (black) using I-CFM (centre-right) and OT-CFM (far right).}
    \label{fig:matches}
\end{figure}

In summary, our contributions are:
\begin{enumerate}[nosep,label=(\arabic*),left=0pt]
    \item We {introduce a generalized formulation of the recent \textit{conditional flow matching} framework} (\autoref{sec:cfm_alg}), and prove its correctness encompassing many existing flow matching methods~\citep{lipman_flow_2022,albergo_building_2023,liu_deep_2022} (See \autoref{tab:prob_path_summary}.)
    \item We consider a special case of CFM that draws source and target samples according to an optimal transport plan, allowing us to solve the dynamic OT and Schr\"odinger bridge problems in a simulation-free way, using only static OT maps between marginal distributions. We show that efficient minibatch approximations to the OT map still yield correct solutions to the generative modeling problem while incurring a low detriment to the dynamic OT solution (\autoref{sec:CFM_setting}).
    \item We evaluate CFM and OT-CFM in experiments on single-cell dynamics, image generation, unsupervised image translation, and energy-based models. We show that the OT-CFM objective leads to more efficient training and decreases inference time while finding better approximate solutions to the dynamic OT and Schr\"odinger bridge problems. For high-dimensional image generation, we also propose improved and reproducible training practices for flow-based models that significantly improve the performance of algorithms from past work (\autoref{sec:experiments}).
    \item We release a Python package, \texttt{torchcfm}, that unifies new and existing algorithms for training flow-based generative models under a shared interface and provides implementations of our main experiments. The Python code is available at \url{https://github.com/atong01/conditional-flow-matching}.
\end{enumerate}

\section{Background: Optimal transport and neural ODEs}\label{sec:new_background}

Throughout the paper, we consider the setting of a pair of data distributions over $\R^d$ with (possibly unknown) densities $q(x_0)$ and $q(x_1)$ (also denoted $q_0$, $q_1$). Generative modeling considers the task of fitting a mapping $f$ from $\R^d$ to $\R^d$ that transforms $q_0$ to $q_1$, that is, if $x_0$ is distributed with density $q_0$ then $f(x_0)$ is distributed with density $q_1$. This includes both the typical case when $q_0$ is an easily sampled density, such as a Gaussian, and the case when $q_0$ and $q_1$ are empirical data distributions available as finite sets of samples.

\subsection{ODEs and probability flows} \label{sec:odes}

A smooth\footnote{To be precise, to ensure the uniqueness of integral curves (and thus of the corresponding flow), we assume the vector field $u$ is at least locally Lipschitz in $x$ and Bochner integrable in $t$.} time-varying vector field $u : [0, 1] \times \R^d \to \R^d$ defines an ordinary differential equation:
 \begin{equation}\label{eq:ode}
dx = u_t(x)\,dt,
\end{equation}
where we use the notation $u_t(x)$ interchangeably with $u(t,x)$. 
Denote by $\phi_t(x)$ the solution of the ODE (\ref{eq:ode}) with initial condition $\phi_0(x)=x$; that is, $\phi_t(x)$ is the point $x$ transported along the vector field $u$ from time $0$ up to time $t$.

Given a density $p_0$ over $\R^d$, the integration map $\phi_t$ induces a pushforward $p_t:=[\phi_t]_\#(p_0)$, which is the density of points $x\sim p_0$ transported along $u$ from time $0$ to time $t$. The time-varying density $p_t$, viewed as a function $p:[0,1]\times\R^d\to\R$, is characterized by the well-known \emph{continuity equation}:
\begin{equation}\label{eq:continuity}
\frac{\partial p}{\partial t}=-\nabla\cdot(p_tu_t)
\end{equation}
and the initial conditions $p_0$. Under these conditions, $u$ is said to be a \emph{probability flow ODE} for $p$, and $p$ is the \emph{(marginal) probability path}\footnote{The terminology is due to $t\mapsto p_t$ being a path on the infinite-dimensional manifold of probability distributions on $\R^d$.} generated by $u$.

\paragraph{Approximating ODEs with neural networks.} Suppose the probability path $p_t(x)$ and the vector field $u_t(x)$ generating it are known and $p_t(x)$ can be tractably sampled. If $v_\theta(\cdot,\cdot):[0,1]\times\R^d\to\R^d$ is a time-dependent vector field parametrized as a neural network with weights $\theta$, $v_\theta$ can be regressed to $u$ via the \textbf{flow matching (FM)} objective:
\begin{equation}\label{eq:fm}
\LFM(\theta) := \E_{t\sim {\gU(0,1)},x\sim p_t(x)} \| v_\theta(t, x) - u_t(x) \|^2.
\end{equation}
\citet{lipman_flow_2022} used a version of this objective with a stochastic regression target to fit ODEs that map a Gaussian density $q_0$ to a target $q_1$. However, this objective becomes intractable for general source and target distributions. In \autoref{sec:new_cfm}, we develop generalizations that allow more flexible and efficient generative modeling.

\paragraph{The case of Gaussian marginals.}
Consider the special case of an ODE whose marginal densities are Gaussian: $p_t(x) = \mathcal{N}(x \mid \mu_t, \sigma_t^2)$. While the ODE that generates these marginal densities is not unique, one of the simplest is the one that satisfies
\begin{equation}\label{eq:flow}
\phi_t(x_0) = \mu_t + \sigma_t \left(\frac{x_0-\mu_0}{\sigma_0}\right),
\end{equation}
which is unique by the following theorem.
\begin{theorem}[Theorem 3 of \citet{lipman_flow_2022}]\label{thm:gaussian_flow}
The unique vector field whose integration map satisfies (\ref{eq:flow}) has the form
\begin{equation}\label{eq:gaussian_ut}
u_t(x) = \frac{\sigma_t'}{\sigma_t} (x - \mu_t) + \mu_t',
\end{equation}
where $\sigma_t'$ and $\mu'_t$ denote the time derivative of $\sigma_t$ and $\mu_t$, respectively, and the vector field $u$ with initial conditions $\mathcal{N}(\mu_0,\sigma_0^2)$ generates the Gaussian probability path $p_t(x)=\mathcal{N}(x \mid \mu_t, \sigma_t^2)$.
\end{theorem}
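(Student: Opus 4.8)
The plan is to read off $u$ from the prescribed flow by using the fundamental relation between a flow and the vector field that generates it, namely that $t\mapsto\phi_t(x_0)$ is the integral curve of $u$ through $x_0$:
\begin{equation*}
\frac{d}{dt}\phi_t(x_0) = u_t\bigl(\phi_t(x_0)\bigr), \qquad \phi_0(x_0) = x_0.
\end{equation*}
First I would differentiate the prescribed form (\ref{eq:flow}) in $t$ (noting that $\mu_0,\sigma_0$ and $x_0$ are constants), giving $\frac{d}{dt}\phi_t(x_0) = \mu_t' + \sigma_t'\,\frac{x_0-\mu_0}{\sigma_0}$. Then I would use (\ref{eq:flow}) itself to eliminate $x_0$: since $\phi_t(x_0)-\mu_t = \sigma_t\,\frac{x_0-\mu_0}{\sigma_0}$ and $\sigma_t>0$, we have $\frac{x_0-\mu_0}{\sigma_0} = \frac{\phi_t(x_0)-\mu_t}{\sigma_t}$, so that
\begin{equation*}
\frac{d}{dt}\phi_t(x_0) = \frac{\sigma_t'}{\sigma_t}\bigl(\phi_t(x_0)-\mu_t\bigr) + \mu_t'.
\end{equation*}
Comparing with the integral-curve identity shows that $u_t$ must coincide with the claimed expression (\ref{eq:gaussian_ut}) at every point of the form $\phi_t(x_0)$.

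To upgrade this to a statement about $u_t$ on all of $\R^d$ — and to obtain uniqueness — I would observe that for each fixed $t$ the map $x_0\mapsto\phi_t(x_0)=\mu_t+\frac{\sigma_t}{\sigma_0}(x_0-\mu_0)$ is an affine bijection of $\R^d$ (again using $\sigma_0,\sigma_t>0$). Hence the constraint above pins down $u_t(x)$ for \emph{every} $x\in\R^d$: the field (\ref{eq:gaussian_ut}) is the only candidate, and conversely one checks that it does generate the flow (\ref{eq:flow}) by substituting it back into the ODE. One should also verify that (\ref{eq:gaussian_ut}) meets the regularity hypotheses from the footnote — it is affine, hence globally Lipschitz in $x$ with constant $|\sigma_t'/\sigma_t|$, which is integrable in $t$ under the standing smoothness/positivity assumptions on $\mu_t$ and $\sigma_t$ — so that the flow is genuinely well defined.

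Finally, for the claim that $u$ with initial density $\mathcal{N}(\mu_0,\sigma_0^2)$ generates $p_t=\mathcal{N}(\mu_t,\sigma_t^2)$, I would use $p_t=[\phi_t]_\#(p_0)$ together with the affineness of $\phi_t$: if $x_0\sim\mathcal{N}(\mu_0,\sigma_0^2)$ then $\phi_t(x_0)=\mu_t+\frac{\sigma_t}{\sigma_0}(x_0-\mu_0)$ is Gaussian with mean $\phi_t(\mu_0)=\mu_t$ and covariance $\frac{\sigma_t^2}{\sigma_0^2}\,\sigma_0^2\mI=\sigma_t^2\mI$, i.e. exactly $\mathcal{N}(\mu_t,\sigma_t^2)$; equivalently one could check that $(p_t,u_t)$ solves the continuity equation (\ref{eq:continuity}), but the pushforward argument avoids that computation. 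There is no deep obstacle here; the only points requiring care are (i) interpreting ``unique'' correctly — uniqueness holds among vector fields whose integration map equals (\ref{eq:flow}), not among all fields generating the Gaussian marginals, of which there are many — and (ii) the mild positivity/smoothness conditions on $\sigma_t,\mu_t$ needed for the ODE, its flow, and the inversion step to be legitimate.
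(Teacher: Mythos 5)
Your proof is correct: differentiating the prescribed flow in $t$, eliminating $x_0$ via the affine relation $\frac{x_0-\mu_0}{\sigma_0}=\frac{\phi_t(x_0)-\mu_t}{\sigma_t}$, and using surjectivity of $\phi_t$ to pin down $u_t$ on all of $\R^d$ is exactly the standard argument; the paper itself offers no proof (it cites Theorem 3 of \citet{lipman_flow_2022}), and your route coincides with the one given there, including the pushforward computation showing the marginals are $\mathcal{N}(\mu_t,\sigma_t^2)$. Your remarks on the scope of uniqueness and on the positivity/regularity of $\sigma_t,\mu_t$ are appropriate and add nothing objectionable.
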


\subsection{Static and dynamic optimal transport}

The (static) optimal transport problem seeks a mapping from one measure to another that minimizes a displacement cost. The case of greatest interest is the $2$-Wasserstein distance between distributions $q_0$ and $q_1$ on $\R^d$ with respect to the Euclidean distance cost $c(x,y)=\|x-y\|$. The corresponding optimization problem is
\begin{equation}\label{eq:ot}
W(q_0, q_1)_2^2 = \inf_{\pi \in \Pi} \int_{\R^d \times \R^d} c(x, y)^2\,d\pi(x, y),
\end{equation}
where $\Pi$ denotes the set of all joint probability measures on $\R^d\times\R^d$ whose marginals are $q_0$ and $q_1$. For compactly supported distributions and for the ground cost $c(x,y)=\|x-y\|$, the set of solutions of (\ref{eq:ot}) is not empty \cite{Villani(2009)}, and $W_2$ is a metric on the space of probability distributions on $\R^d$ with finite second moment.

The dynamic form of the 2-Wasserstein distance is defined by an optimization problem over vector fields $u_t$ that transform one measure to the other:
\begin{equation}\label{eq:dot}
W(q_0, q_1)^2_2 = \inf_{p_t, u_t} \int_{\R^d} \int_0^1 p_t(x) \|u_t(x) \|^2\,dt\,dx,
\end{equation}
with $p_t \ge 0$ and subject to the boundary conditions $p_0 = q_0$, $p_1 = q_1$, and the continuity equation (\ref{eq:continuity}). The equivalence between the dynamic and static optimal transport formulations was first proven in \cite{Benamou(2000)} under the assumptions that $q_0$ and $q_1$ are compactly supported distributions with bounded density. We refer to \cite[Chapter2]{Ambrosio2013} for a recent overview on optimal transport the relation between the two formulations.

\citet{tong_trajectorynet_2020, finlay_how_2020} showed that CNFs with $\normltwo$ regularization approximate dynamic optimal transport. For general marginals, however, these models required integrating over and backpropagating through tens to hundreds of function evaluations, resulting in both numerical and efficiency issues. We aim to avoid these issues by directly regressing to the vector field in a simulation-free way.

Optimal transport is also related to the Schr\"odinger bridge (SB) problem \citep{leonard_schrodinger}. We show in \autoref{sec:sb_flow} that a variant of the algorithm we propose recovers the probability flow of the solution to a SB problem with a Brownian motion reference process.

\section{Conditional flow matching: ODEs from static couplings}\label{sec:new_cfm}

\subsection{Vector fields generating mixtures of probability paths}\label{sec:cfm_alg}

Suppose that the marginal probability path $p_t(x)$ is a mixture of probability paths $p_t(x|z)$ that vary with some conditioning variable $z$, that is,
\begin{equation}\label{eq:ppath}
p_t(x) = \int p_t(x | z) q(z)\, dz,
\end{equation}
where $q(z)$ is some distribution over the conditioning variable.
If the probability path $p_t(x|z)$ is generated by the vector field $u_t(x|z)$ from initial conditions $p_0(x|z)$ (see \autoref{sec:odes}), then the vector field
\begin{equation}\label{eq:mvec}
u_t(x) := \E_{q(z)} \frac{u_t(x | z) p_t(x | z)}{p_t(x)}
\end{equation}
generates the probability path $p_t(x)$, under some mild conditions:
\begin{theoremE}[][end,restate]\label{thm:marginal_vec}
The marginal vector field (\ref{eq:mvec}) generates the probability path (\ref{eq:ppath}) from initial conditions $p_0(x)$.
\end{theoremE}
\begin{proofE}
To verify this, we first check that $p_t$ and $u_t$ satisfy the continuity equation.
\begin{align*}
\shortintertext{We start with the derivative w.r.t.\ time of (\ref{eq:ppath})}
\frac{d}{dt} p_t(x)
&= \frac{d}{dt} \int p_t(x | z)q(z) dz \\
\shortintertext{by Leibniz Rule,}
&= \int \frac{d}{dt} \left (p_t(x | z)q(z) \right ) dz \\
\shortintertext{since $u_t(\cdot | z)$ generates $p_t(\cdot | z)$,}
&= -\int \mathrm{div} \left (u_t(x | z) p_t(x | z)q(z) \right ) dz \\
\shortintertext{exchanging the derivative and integral,}
&= -\mathrm{div} \left ( \int u_t(x | z) p_t(x | z)q(z)  dz \right )\\
\shortintertext{Using (\ref{eq:mvec}),}
&= -\mathrm{div} \left (u_t(x) p_t(x) \right )
\end{align*}
satisfying the continuity equation $\frac{d}{dt} p_t(x) + \mathrm{div} \left (u_t(x) p_t(x) \right ) = 0$.
\end{proofE} 
All proofs appear in \autoref{sec:proofs}. This result extends \cite[Theorem 1]{lipman_flow_2022} to general conditioning variables and delineates some minor conditions on $q(z)$. 

\paragraph{A regression objective for mixtures.}

We are interested in the case where conditional probability paths $p_t(x|z)$ and vector fields $u_t(x|z)$ are known and have a simple form, and we wish to recover the vector field $u_t(x)$, defined by (\ref{eq:mvec}), that generates the probability path $p_t(x)$. Exact computation via (\ref{eq:mvec}) is generally intractable, as the denominator $p_t(x)$ is defined by an integral (\ref{eq:ppath}) that may be difficult to evaluate. Instead, we develop an unbiased stochastic objective for regression of a learned vector field to $u_t(x)$, which generalizes the unconditional flow matching objective (\ref{eq:fm}).

Let $v_\theta(\cdot, \cdot):[0,1]\times\R^d\to\R^d$ be a time-dependent vector field parametrized as a neural network with weights $\theta$. Define the \textbf{conditional flow matching (CFM)} objective:
\begin{equation}\label{eq:CFM}
\LCFM(\theta) := \E_{t, q(z), p_t(x | z)} \|v_\theta(t, x) - u_t(x | z)\|^2.
\end{equation}
The CFM objective describes how to regress against the marginal vector field $u_t(x)$ given by (\ref{eq:mvec}) with access only to samples from the conditional probability path $p_t(x|z)$ and conditional vector fields $u_t(x|z)$. This is formalized in the following theorem.
\begin{theoremE}[][end,restate]\label{thm:cmf}
If $p_t(x) > 0$ for all $x \in \R^d$ and $t \in [0, 1]$, then, up to a constant independent of $\theta$, $\LCFM$ and $\LFM$ are equal, and hence
\begin{equation}
\nabla_\theta \LFM(\theta) = \nabla_\theta \LCFM(\theta).
\end{equation}
\end{theoremE}
\begin{proofE}
For this proof we need (\ref{eq:ppath}), (\ref{eq:mvec}) and the existence and exchange of many integrals. As in \citet{lipman_flow_2022} we assume that $q$, $p_t(x | z)$ are decreasing to zero at sufficient speed as $\|x\|\to \infty$ and that $u_t, v_t, \nabla_\theta v_t$ are bounded.
\begin{align*}
\nabla_\theta \E_{p_t(x)} \|v_\theta (t,x) - u_t(x)\|^2 &= \nabla_\theta \E_{p_t(x)}\left (\|v_\theta (t,x)\|^2 - 2 \left \langle  v_\theta (t,x), u_t(x) \right \rangle + \|u_t(x)\|^2  \right )\\
&= \nabla_\theta \E_{p_t(x)} \left (\|v_\theta (t,x)\|^2 - 2 \left \langle  v_\theta (t,x), u_t(x) \right \rangle \right ) \\
\nabla_\theta \E_{q(z), p_t(x | z)} \|v_\theta (t,x) - u_t(x| z)\|^2 &= \\
\qquad \nabla_\theta \E_{q(z), p_t(x | z)} &\left (\|v_\theta (t,x)\|^2 - 2 \left \langle  v_\theta (t,x), u_t(x| z) \right \rangle + \|u_t(x| z)\|^2 \right ) \\
&= \E_{q(z), p_t(x | z)}\nabla_\theta \left (\|v_\theta (t,x)\|^2 - 2 \left \langle  v_\theta (t,x), u_t(x| z) \right \rangle \right )
\end{align*}
By bilinearity of the 2-norm and since $u_t$ is independent of $\theta$. Next,
\begin{align*}
\E_{p_t(x)} \|v_\theta (t,x)\|^2 
&= \int \|v_\theta (t,x)\|^2 p_t(x) dx \\
&= \iint \|v_\theta (t,x)\|^2 p_t(x| z) q(z) dz dx \\
&= \E_{q(z), p_t(x | z)} \|v_\theta (t,x)\|^2
\end{align*}
Finally,
\begin{align*}
\E_{p_t(x)} \left \langle  v_\theta (t,x), u_t(x) \right \rangle
&= \int \left \langle  v_\theta (t,x), \frac{\int u_t(x | z) p_t(x | z) q(z) dz}{p_t(x)} \right \rangle p_t(x) dx \\
&= \int \left \langle  v_\theta (t,x), \int u_t(x | z) p_t(x | z) q(z) dz\right \rangle dx \\
&= \iint \left \langle  v_\theta (t,x), u_t(x | z) \right \rangle p_t(x | z) q(z) dz dx \\
&= \E_{q(z), p_t(x | z)} \left \langle  v_\theta (t,x), u_t(x| z) \right \rangle
\end{align*}
Where we first substitute (\ref{eq:mvec}) then change the order of integration for the final equality. Since at all times $t$ the gradients of $\LFM$ and $\LCFM$ are equal, $\nabla_\theta \LFM(\theta) = \nabla_\theta \LCFM(\theta)$
\end{proofE}
The CFM objective is useful when the marginal vector field $u_t(x)$ is intractable but the conditional vector field $u_t(x | z)$ is tractable. As long as we can efficiently sample from $q(z)$ and $p_t(x|z)$ and calculate $u_t(x|z)$, we can use this stochastic objective to regress $v_\theta$ to the marginal vector field $u_t(x)$.

We discuss the variance arising from the stochastic regression target, and ways to reduce it, in \autoref{sec:vraab}, \autoref{vrmodels:ut-otcfm}, \autoref{vrmodels:ut}.

\begin{algorithm}[t]
  \caption{Conditional Flow Matching}
  \label{alg:cfm}
\begin{algorithmic}
\State {\bfseries Input:} Efficiently samplable $q(z)$, $p_t(x | z)$, and computable $u_t(x | z)$ and initial network $v_{\theta}$.
\While{Training}
\State $z \sim q(z); \quad t \sim \mathcal{U}(0, 1)$; \quad $x \sim p_t(x | z)$
\State $\LCFM(\theta) \gets \| v_\theta(t, x) - u_t(x | z)\|^2$
\State $\theta \gets \mathrm{Update}(\theta, \nabla_\theta \LCFM(\theta))$
\EndWhile
\State \Return $v_\theta$
\end{algorithmic}
\end{algorithm}

\newcommand{\no}[0]{$\times$}
\newcommand{\yes}[0]{$\checkmark$}
\begin{table}[t]
    \centering
    \caption{Probability path definitions for existing methods which fit in the generalized conditional flow matching framework (top) and our newly defined paths (bottom). We define two new probability path objectives that can handle general source distributions and optimal transport flows.}\vspace*{-1em}
    \label{tab:prob_path_summary}
     \resizebox{\linewidth}{!}{%
    \begin{tabular}{@{}l|rrr|ccc}
    \toprule
    Probability Path & $q(z)$ & $\mu_t(z)$ & $\sigma_t$ & Cond.\ OT & Marginal OT & General source\\
    \midrule
    Var.\ Exploding~\citep{song_generative_2019} & $q(x_1)$ & $x_1$ & $\sigma_{1-t}$ & \no & \no & \no\\ 
    Var.\ Preserving~\citep{ho_denoising_2020} & $q(x_1)$ & $\alpha_{1-t} x_1$ & $\sqrt{1- \alpha_{1-t}^2}$ & \no & \no & \no\\
    Flow Matching~\citep{lipman_flow_2022}      & $q(x_1)$ & $t x_1$ & $t \sigma - t + 1$ & \yes & \no & \no\\ 
    {Rectified Flow}~\cite{liu_rectified_2022}  & $q(x_0) q(x_1)$ & $t x_1 + (1-t) x_0$ & 0 & \yes & \no & \yes\\
    {Var.\ Pres.\ Stochastic Interpolant}~\citet{albergo_building_2023} & $q(x_0) q(x_1)$ & $\cos(\frac{1}{2} \pi t) x_0 + \sin (\frac{1}{2} \pi t) x_1$ & 0 & \yes & \no & \yes\\
    Independent CFM & $q(x_0) q(x_1)$ & $t x_1 + (1-t) x_0$ & $\sigma$ & \yes & \no & \yes\\
    \midrule
    {(Ours)} Optimal Transport CFM & $\pi(x_0, x_1)$ & $t x_1 + (1-t) x_0$ & $\sigma$ & \yes & \yes & \yes \\
    {(Ours)} Schr\"odinger Bridge CFM & $\pi_{2 \sigma^2}(x_0, x_1)$ & $t x_1 + (1-t) x_0$ & $\sigma \sqrt{t (1-t)}$ & \yes & \yes & \yes \\
    \bottomrule
    \end{tabular}
    }
\end{table}

\subsection{Sources of conditional probability paths}\label{sec:CFM_setting}

In this section, we introduce several forms of CFM depending on the choices of $q(z)$, $p_t(\cdot | z)$, and $u_t(\cdot | z)$. All of the CFM variants and related objectives from prior work are summarized in \autoref{tab:prob_path_summary}.
\begin{itemize}[nosep,left=0pt]
\item \autoref{sec:fm_from_gaussian}: We interpret the algorithm of \citet{lipman_flow_2022} (FM from a Gaussian) as a special case of CFM. 
\item \autoref{sec:pair_conditional_fm}: We relax the Gaussian source requirement by letting the condition $z$ be a pair $(x_0, x_1)$ of an initial and a terminal point. In the basic form of CFM (I-CFM), we take the distribution $q(z)$ to equal $q(x_0)q(x_1)$, allowing generative modeling with an arbitrary source distribution.%
\item \autoref{sec:ot_conditional_fm}: We consider joint distributions $q(z)=q(x_0,x_1)$ that are given by minibatch optimal transport maps, causing the learned flow to be an (approximate) OT flow.  
\item \autoref{sec:sb_flow}: we consider $q(z)$ given by an entropy-regularized OT map and show that the CFM objective with this $q(z)$ solves the Schr\"odinger bridge problem.
\end{itemize}

\subsubsection{FM from the Gaussian}\label{sec:fm_from_gaussian}
\citet{lipman_flow_2022} considered the problem of unconditional generative modeling given a training dataset. Identifying the condition $z$ with a single datapoint $z \coloneqq x_1$, and choosing a smoothing constant $\sigma>0$, one sets
\begin{align}\label{eq: fm_density_vectorfield}
p_t(x | z) &= \mathcal{N}(x\mid t x_1, (t \sigma - t + 1)^2), \\
u_t(x | z) &= \frac{x_1 - (1 - \sigma) x}{1 - (1 - \sigma) t},
\end{align}
which is a probability path from the standard normal distribution ($p_0(x|z)=\gN(x;0,\mI)$) to a Gaussian distribution centered at $x_1$ with standard deviation $\sigma$ ($p_1(x|z)=\gN(x;x_1,\sigma^2)$). If one sets $q(z)=q(x_1)$ to be the uniform distribution over the training dataset, the objective introduced by \citet{lipman_flow_2022} is equivalent to the CFM objective (\ref{eq:CFM}) for this conditional probability path. 

We emphasize that although the \emph{conditional} probability path $p_t(x|z)$ is an optimal transport path from $p_0(x|z)$ to $p_1(x|z)$, the \emph{marginal} path $p_t(x)$ is \textbf{not} in general an OT path from the standard normal $p_0(x)$ to the data distribution $p_1(x)$.

\subsubsection{Basic form of CFM: Independent coupling}\label{sec:pair_conditional_fm}

In the basic form of CFM (I-CFM), we identify $z$ with a pair of random variables, a source point $x_0$ and a target point $x_1$, and set $q(z)=q(x_0)q(x_1)$ to be the independent coupling. We let the conditionals be Gaussian flows between $x_0$ and $x_1$ with standard deviation $\sigma$, defined by
\begin{align}
p_t(x | z) &= \mathcal{N}(x\mid t x_1 + (1 - t) x_0, \sigma^2),\label{eq:cfm:pt}\\
u_t(x | z) &= x_1 - x_0. \label{eq:cfm:ut}
\end{align}
We note that the formulation of $u_t(x | z)$ follows from an application of \autoref{thm:gaussian_flow} to the conditional probability path with $\mu_t = t x_1 + (1 - t) x_0$ and $\sigma_t = \sigma$. Furthermore, we note that $p_t(x | z)$ is efficiently samplable and $u_t$ is efficiently computable, thus gradient descent on $\LCFM$ is also efficient. For this choice of $z$, $p_t(\cdot | z)$, and $u_t(\cdot | z)$, we know the marginal boundary probabilities approach $q_0$ and $q_1$ respectively as $\sigma \to 0$. This is made explicit in the following Proposition:

\begin{propositionE}[][end,restate]\label{lem:cfm}
The marginal $p_t$ corresponding to $q(z)=q(x_0)q(x_1)$ and the $p_t(x | z),u_t(x | z)$ in (\ref{eq:cfm:pt}) and (\ref{eq:cfm:ut}) has boundary conditions $p_1 = q_1 \ast \mathcal{N}(x \mid 0, \sigma^2)$ and $p_0 = q_0 \ast \mathcal{N}(x \mid 0, \sigma^2)$, where $\ast$ denotes the convolution operator.
\end{propositionE}
\begin{proofE}
We start with (\ref{eq:ppath}) to show the result of the lemma. We note that $q(z) = q((x_0, x_1)) = q(x_0) q(x_1)$
\begin{align*}
p_t(x) &= \int p_t(x | z) q(z) dz \\
&= \int  \mathcal{N}(x \, |t x_1 + (1 - t) x_0, \sigma^2) q((x_0, x_1)) d(x_0, x_1) \\
&= \iint  \mathcal{N}(x \,|t x_1 + (1 - t) x_0, \sigma^2) q(x_0) q(x_1) d x_0 d x_1
\end{align*}
evaluated at $i = 0, 1$ respectively. Therefore, at $t=0$,
\begin{align*}
p_0(x) &= \iint \mathcal{N} (x \,| x_0,\sigma^2) q(x_0) q(x_1) d x_0 d x_1 \\
       &= \int \mathcal{N}(x \,| x_0,\sigma^2) q(x_0) d x_0\\
       &= q(x_0) \ast \mathcal{N}(x \,| 0, \sigma^2).
\end{align*}
This is also true for $t=1$. 
\end{proofE}
In particular, as $\sigma\to0$, the marginal vector field $u_t(x)$ approaches one that transports the distribution $q(x_0)$ to $q(x_1)$ and can thus be seen as a generative model of $x_1$. Note that there is no requirement for $q(x_0)$ to be Gaussian. Conditioning on $x_0$ allows us to generalize flow matching to arbitrary source distributions with intractable densities. In the case of FM from a Gaussian, while each conditional flow is the dynamic optimal transport flow from $\gN(x_0,\sigma^2)$ to $\gN(x_1,\sigma^2)$, the marginal vector field $u_t(x)$ is not necessarily an OT flow. 
 
\paragraph{Connection with related Rectified Flow and Stochastic interpolants methods.}
{We note that I-CFM is closely related to the algorithms proposed by \citet{albergo_building_2023,liu_rectified_2022}. In the case where the conditional probability path $p_t$ is a Dirac (\emph{i.e., } $\sigma = 0$), I-CFM is equivalent to \citep{liu_rectified_2022}. Furthermore, if we consider the Gaussian mean $\mu_t = \cos(\frac{1}{2} \pi t) x_0 + \sin (\frac{1}{2} \pi t) x_1$ instead of the linear interpolation, I-CFM would be equivalent to the variance preserving stochastic interpolant in \citet{albergo_building_2023}, which has also been further generalized.}

\paragraph{Connection to FM from the Gaussian.} 

There exists a set of conditional probability paths conditioned on $x_1$ \textit{and} $x_0 \sim \mathcal{N}(0, 1)$ that have an equivalent probability flow to the marginal $p_t$ of flow matching from the Gaussian (\autoref{sec:fm_from_gaussian}), which is only conditioned on $x_1$. These paths are defined by
\begin{equation}
p_t(x | z) = \mathcal{N}(x\mid t x_1 + (1 - t) x_0, (\sigma t)^2 + 2 \sigma t (1-t)).\label{eq:scfm:pt} 
\end{equation}
\autoref{lem:scfm} states an equivalence between I-CFM with these paths and the objective from \autoref{sec:fm_from_gaussian}. 

\subsubsection{Optimal transport CFM}\label{sec:ot_conditional_fm}

In this section, we present our second main contribution.
The formulation in the previous section can readily be generalized to distributions $q(z)=q(x_0,x_1)$ in which $x_0$ and $x_1$ are not independent, as long as $q(z)$ has marginals $q(x_0)$ and $q(x_1)$. Therefore, we propose to set $q(z)$ to be the 2-Wasserstein optimal transport map $\pi$ achieving the infimum in (\ref{eq:ot}), namely, %
\begin{equation}
q(z) \coloneqq \pi(x_0, x_1).\label{eq:otcfm:qz}
\end{equation}
In this case, $z$ is still a tuple of points, but instead of $x_0,x_1$ being sampled independently from their marginal distributions, they are sampled jointly according to the optimal transport map $\pi$. We call this method \emph{optimal transport CFM} (OT-CFM). If one uses the $p_t(x | z)$ defined by (\ref{eq:cfm:pt}) and $u_t(x | z)$ in (\ref{eq:cfm:ut}), OT-CFM is equivalent to \emph{dynamic} optimal transport in the following sense.
\begin{propositionE}[][end,restate]\label{lem:otcfm}
The results of \autoref{lem:cfm} also hold for $q(z)$ in (\ref{eq:otcfm:qz}). Furthermore, assuming regularity properties of $q_0$, $q_1$, and the optimal transport plan $\pi$, as $\sigma^2 \to 0$ the marginal path $p_t$ and field $u_t$ minimize (\ref{eq:dot}), \ie,\ $u_t$ solves the dynamic optimal transport problem between $q_0$ and $q_1$.
\end{propositionE}
\begin{proofE}
We will assume certain regularity conditions on $q_0$, $q_1$, and $\pi$ to allow reduction to known results. We leave it to future work to determine which of these conditions are necessary and which are redundant with other conditions. However, because we are concerned with approximation of $u_t(x)$ with neural networks, which are typically smooth, results that relax the regularity assumptions may be vacuous in practice.

\textit{Preliminaries.} We assume that $q_0$ and $q_1$ are compactly supported and admit bounded densities with respect to the Lebesgue measure. 
Then the conditions for Brenier's theorem~\citep{brenier_polar_1991} are satisfied. By Brenier's theorem, the optimal joint $\pi$ is unique and is supported on the graph $(x, T(x))$ of a Monge map $T: \R^d \to \R^d$, and $p_t(x)$ is equal to McCann's interpolation \cite[Chapter 7]{peyre_computational_2019}
\begin{equation}\label{eq:mccann}
    p_t = ((1 -t) \textrm{Id} + t T)_\# p_0.
\end{equation}
In addition, we know that $T(x)$ can be parameterized as the gradient of a convex function, \ie, $T(x) = \nabla \psi(x)$. This characterization of $T$ implies that the conditional probability paths, given by $\phi_t(x)=x+t(T(x)-x)$, do not cross, \ie, $p_t(x | x_0, T(x_0)) = p_t(x)$ for all $(t, x)$.\footnote{Proof: If the paths from distinct $x_0$ and $x_0'$ cross, so $\phi_t(x_0)=\phi_t(x_0')$, then $(1-t)x_0+t\nabla\psi(x_0)=(1-t)x_0'+t\nabla\psi(x_0')$. Taking dot product with $x_0-x_0'$, $(t-1)\|x_0-x_0'\|^2=t\langle \nabla\psi(x_0)-\nabla\psi(x_0'),x_0-x_0'\rangle$. However, we have $(t-1)\|x_0-x_0'\|^2<0$ and $t\langle \nabla\psi(x_0)-\nabla\psi(x_0'),x_0-x_0'\rangle\geq0$ by convexity of $\psi$, contradiction.} It is known that the probability path $p_t=[\phi_t]_{\#}p_0$ and its associated vector field, given by $u_t(\phi_t(x))=T(x)-x$, solve the optimal transport problem \cite[Proposition 1.1]{Benamou(2000)}.

We assume that the induced marginals $p_t$ have bounded densities with respect to Lebesgue measure and that $T$ is almost everywhere continuous in $x$, which implies the same for $\phi_t$. Injectivity of $\phi_t$ and noncrossing of paths implies $u_t(\phi_t(x))$ is almost everywhere continuous in $\phi_t(x)$.

\textit{Formal statement of the result.} Denote by $p_t^\sigma(x)$, $p_t^\sigma(x|z)$ the densities in \autoref{lem:cfm}, and by $p_t^\sigma(x)$. We will show that for $p_t$-almost every $x$ and almost every $t$, 
\begin{equation}\label{eq:ut_ae_eq_utsigma}
u_t(x)=\lim_{\sigma\to0}u_t(x)=\lim_{\sigma\to0}\frac{\E_{z\sim q(z)}p_t^\sigma(x|z)u_t(x|z)}{p_t^\sigma(x)}.
\end{equation}

\textit{Proof.} It suffices to show the equality for $x$ in the support of $p_t$, or in the image of the support of $q_0$ under $\phi_t$. We identify $z$ in the expectation with a pair $(x_0,T(x_0))$ due to $q(x_0,x_1)=\pi(x_0,x_1)$ having support on the graph of the Monge map. Noting that \[u_t\Big (x|\big(x_0,T(x_0)\big) \Big)=u_0(x_0)=T(x_0)-x_0\quad\forall x,\] we see that (\ref{eq:ut_ae_eq_utsigma}) is equivalent to
\begin{equation}\label{eq:ut_ae_eq_utsigma_nicer}
u_0(\phi_t^{-1}(x))=\lim_{\sigma\to0}\frac{\E_{q(x_0)}p_t^\sigma \Big(x|\big (x_0,T(x_0)\big ) \Big) u_0(x_0)}{p_t^\sigma(x)}.
\end{equation}
By the same argument as in the proof of \autoref{lem:cfm}, we have that $p_t^\sigma=p_t*{\cal N}(0,\sigma^2)$, by integration over $x_0$ of the conditional equality $p_t \Big(\cdot|\big(x_0,T(x_0)\big)\Big)={\cal N}(\phi_t(x_0),\sigma^2)=\delta_{\phi_t(x_0)}*{\cal N}(0,\sigma^2)$.

Symmetry of the Gaussian implies that \[p_t^\sigma(x|(x_0,T(x_0)))=p_t^\sigma(\phi_t(x_0)|(\phi_t^{-1}(x),T(\phi_t^{-1}(x)))).\] Therefore
\begin{align*}
\E_{q(x_0)}p_t^\sigma(x|(x_0,T(x_0)))u_0(x_0)
&=
\E_{q(x_0)}\left[p_t^\sigma(\phi_t(x_0)|(\phi_t^{-1}(x),T(\phi_t^{-1}(x))))u_0(x_0)\right]\\
\text{[by change of variables $x'=\phi_t(x_0)$]}
&=\E_{p_t(x')}\left[p_t^\sigma(x'|(\phi_t^{-1}(x),T(\phi_t^{-1}(x))))u_0(\phi_t^{-1}(x'))\right]\\
&=\E_{p_t^\sigma(x'|(\phi_t^{-1}(x),T(\phi_t^{-1}(x))))}\left[p_t(x')u_0(\phi_t^{-1}(x'))\right]\\
&=\E_{\Delta x\sim{\cal N}(0,\sigma^2)}\left[p_t(x+\Delta x)u_0(\phi_t^{-1}(x+\Delta x))\right]\\
&=\left( p_t(\cdot)u_0(\phi_t^{-1}(\cdot))*{\cal N}(0,\sigma^2)\right)(x).
\end{align*}
The standard fact that ${\cal N}(0,\sigma^2)\xrightarrow{\sigma\to0}\delta_0$ in distribution implies that if $f$ is a bounded, almost everywhere continuous, compactly supported function, then $(f*{\cal N})(x)\to f(x)$ pointwise for almost every $x$. By the hypotheses, $p_t$ and $p_t\cdot(u_0\circ\phi_t^{-1})$ have this property. It follows that, for every $t$ and almost all $x$,
\begin{align*}
\lim_{\sigma\to0}\frac{\E_{q(x_0)}p_t^\sigma(x|(x_0,T(x_0))u_0(x_0)}{p_t^\sigma(x)}
&=\lim_{\sigma\to0}\frac{\left((p_t\cdot(u_0\circ\phi_t^{-1}))*{\cal N}(0,\sigma^2)\right)(x)}{\left(p_t*{\cal N}(0,\sigma^2)\right)(x)}\\
&=\frac{(p_t\cdot(u_0\circ\phi_t^{-1}))(x)}{p_t(x)}\\
&=u_0(\phi_t^{-1}(x)),
\end{align*}
which proves (\ref{eq:ut_ae_eq_utsigma_nicer}).
\end{proofE}

{We consider two cases: (1) when the data set is small enough and we know the static optimal transport plan (e.g.\ single cell data). (2) when the data is too large (or continuous) (e.g.\ image data) and the static OT plan is computationally infeasible to determine exactly. In the first case we are able to extend the transport map to unseen data similar to the task presented in \cite{Bunne2023}. In the second case we show an approximation with minibatch OT improves over a random plan in terms of generative modelling performance and training time. }

\paragraph{Minibatch OT approximation.} For large datasets, the transport plan $\pi$ can be difficult to compute and store due to OT's cubic time and quadratic memory complexity in the number of samples~\citep{cuturi_sinkhorn_2013,tong_trajectorynet_2020}. {Therefore, we rely on} a minibatch OT approximation similar to \citet{fatras_minibatch_2021}. Although minibatch OT incurs an error relative to the exact OT solution, it has been successfully used in many applications like domain adaptation or generative modeling~\citep{Damodarandeepjdot2018, genevay_2018}. Specifically, for each batch of data $(\{x_0^{(i)}\}_{i=1}^B,\{x_1^{(i)}\}_{i=1}^B)$ seen during training, we sample pairs of points from the joint distribution $\pi_{\rm batch}$ given by the OT plan between the source and target points in the batch. (The OT batch size need not match the optimization batch size, but we keep them equal for simplicity.) Thus, we solve a minibatch approximation of dynamic optimal transport. However, when the OT batch size equals the support size of $(q_0, q_1)$, we recover exact OT and therefore, by \autoref{lem:otcfm}, learn the exact dynamic optimal transport. We show empirically that the batch size can be much smaller than the full dataset size and still give good performance, which aligns with prior studies~\citep{fatras_learning_2020,fatras_unbalanced_2021}. {Concurrently, a similar framework and theoretical results appeared in \cite{pooladian_2023_multisample}.}

\subsubsection{Schr\"odinger bridge CFM}
\label{sec:sb_flow}

Recently, there has been significant effort in learning diffusion models with general source distributions, formulated as a Schr\"odinger bridge problem~\citep{de_bortoli_diffusion_2021,vargas_solving_2021, chen_likelihood_2022} or bridge matching \cite{peluchetti2022nondenoising, liu2022let, ye2022hitting}. Here we show that SB-CFM, an entropic variant of OT-CFM, can be used to train an ODE to match the probability flow of a Schr\"odinger bridge with a Brownian motion reference process.

Let $p_{\rm ref}$ be the standard Wiener process scaled by $\sigma$ with initial-time marginal $p_{\rm ref}(x_0)=q(x_0)$. The Schr\"odinger bridge problem \citep{schrodinger} seeks the process $\pi$ that is closest to $p_{\rm ref}$ while having initial and terminal marginal distributions specified by the data distribution $q(x_0)$ and $q(x_1)$:
\begin{equation}\label{eq:schrodinger_bridge}
\pi^*:=\argmin_{\pi(x_0)=q(x_0),\pi(x_1)=q(x_1) }{\rm KL}(\pi\,\|\,p_{\rm ref}).
\end{equation}
We define the joint distribution
\begin{equation}\label{eq:sbcfm:qz}
q(z) \coloneqq \pi_{2\sigma^2}(x_0, x_1)
\end{equation}
where $\pi_{2\sigma^2}$ is the solution of the entropy-regularized optimal transport problem \citep{cuturi_sinkhorn_2013} with cost $\|x_0-x_1\|$ and entropy regularization $\lambda=2 \sigma^2$ (see (\ref{eq:sot}) for the background on entropic OT). We set the conditional path distribution to be a Brownian bridge with diffusion scale $\sigma$ between $x_0$ and $x_1$, with probability path and generating vector field
\begin{align}
p_t(x \mid z) &= \mathcal{N}( x \mid t x_1 + (1 - t) x_0, t(1-t)\sigma^2) \label{eq:sbcfm:pt} \\
u_t(x \mid z) &= \frac{1-2t}{2t(1-t)}(x - ( t x_1 + (1-t)x_0) ) + (x_1 - x_0) \label{eq:sbcfm:ut},
\end{align}
where $u_t$ is computed by (\ref{eq:gaussian_ut}) as the vector field generating the probability path $p_t(x|z)$. The marginal coupling $\pi_{2\sigma^2}$ and $u_t(x|z)$ define $u_t(x)$, which is approximated by the regression objective in \autoref{alg:mbsbcfm}. The solution of the SB is known to be the map which is the solution of the entropically-regularized OT problem, motivating the next proposition.
\begin{propositionE}[][restate,end]\label{lem:sbcfm}
The marginal vector field $u_t(x)$ defined by (\ref{eq:sbcfm:qz}) and (\ref{eq:sbcfm:ut}) generates the same marginal probability path as the solution $\pi^*$ to the SB problem in (\ref{eq:schrodinger_bridge}).
\end{propositionE}
\begin{proofE} 
Using Theorem 2.4 of \citet{leonard_path_measures}, \citet{de_bortoli_diffusion_2021} showed that the initial and terminal marginals of $\pi^*$ are the solution to the static OT problem
\begin{equation*}
\pi^*(x_0,x_1)=\argmin{\rm KL}(\pi^*(x_0,x_1)\|p_{\rm ref}(x_0,x_1)),
\end{equation*}
while the conditional path distributions $\pi^*(-|x_0,x_1)$ minimize \[\mathbb{E}_{x_0,x_1\sim\pi^*(x_0,x_1)} {\rm KL}(\pi^*(-|x_0,x_1)\|p_{\rm ref}(-|x_0,x_1)).\]

The optimization problem for $\pi^*(x_0,x_1)$ is equivalent to the entropy-regularized optimal transport problem with optimum $\pi_{2\sigma^2}$, as observed by \citet{de_bortoli_diffusion_2021}. (The key observation is that $\log p_{\rm ref}(x_0,x_1)=\frac{c(x_0,x_1)^\alpha}{2\sigma^2}+{\rm const.}$, where $c(x,y)=\|x-y\|$ and $\alpha=2$.) The divergences between conditional path distributions are optimized by Brownian bridges with diffusion scale $\sigma$ pinned at $x_0$ and $x_1$, which are well-known to have marginal probability path $p_t$ in (\ref{eq:sbcfm:pt}), and, by (\ref{eq:gaussian_ut}), are generated by the vector fields $u_t$ in (\ref{eq:sbcfm:ut}). %
\end{proofE}

While we define SB-CFM  with an entropic regularization coefficient of $\varepsilon=2\sigma^2$, the flow still matches the marginals for any choice of $\varepsilon$. Interestingly, we recover OT-CFM when $\varepsilon \to 0$ and I-CFM when $\varepsilon \to \infty$. {A similar result was proven in a concurrent work \cite{pooladian_2023_multisample}.}

\section{Related work}\label{sec:rw}

\paragraph{Simulation-free continuous-time modeling.} Simulation-free training is common in stochastic flow models where backpropagating through the simulation is numerically challenging and has high variance~\citep{li_scalable_2020}. While these diffusion models have recently achieved exceptional generative performance on many tasks~\citep{sohl-dickstein_deep_2015,song_generative_2019,song_improved_2020,ho_denoising_2020,song_score-based_2021,dhariwal_diffusion_2021,watson_broadly_2022}, their simulation requires an inherently costly SDE simulation with many follow-up works to improve inference efficiency~\citep{lu_dpm-solver_2022,salimans_progressive_2022,watson_learning_2022,song_denoising_2021,bao_analytic-dpm_2022}. These methods generally consider a simple Gaussian diffusion process, and do not consider generalizing the source distribution. Other works consider general source distributions but this makes optimization and inference more challenging, needing multiple iterations or other tricks to perform well~\citep{wang_deep_2021,de_bortoli_diffusion_2021,vargas_solving_2021}.

Prior work considering simulation-free training of CNFs considers algorithms that are equivalent to CFM with Gaussian source distribution~\citep{rozen_moser_2021,ben-hamu_matching_2022,lipman_flow_2022} or independent samples from $q_0$, $q_1$~\citep{albergo_building_2023,albergo_stochastic_2023,neklyudov_action_2022}. Recent work also studies Schr\"odinger bridges from unpaired samples \citep{shi_conditional_2022} and regularization of flows using dynamic OT \citep{liu_flow_2023}. {We also note the  work \citet{pooladian_2023_multisample}, concurrent with the preprint version of this paper. Other concurrent works explore various solutions to approximate Schr\"odinger bridges \citep{somnath2023aligned, shi2023diffusion, liu_sb_2023}.}

\paragraph{Dynamic optimal transport.} There are a variety of methods that consider dynamic OT between continuous distributions with neural networks; however, these require constrained architectures~\citep{leygonie_adversarial_2019,makkuva_optimal_2020,bunne_proximal_2022} or use a regularized CNF, which is challenging to optimize~\citep{tong_trajectorynet_2020,finlay_how_2020,onken_ot-flow_2021,huguet_manifold_2022}. With our work it is possible to achieve optimal transport flows without either of these constraints.

\section{Experiments}\label{sec:experiments}

In this section we empirically evaluate the I-CFM, OT-CFM, and SB-CFM objectives, as well as algorithms from prior work, with respect to both optimal transport and generative modeling criteria. All experiment details can be found in \autoref{app:details}.

\begin{table}[t]
\centering
\caption{Comparison of neural optimal transport methods over four distribution pairs ($\mu \pm \sigma$ over five seeds) in terms of fit (2-Wasserstein), optimal transport performance (normalized path energy), and runtime. `---' indicates a method that requires a Gaussian source. Best in \textbf{bold}. CFM and RF models are trained on a single CPU core, other baselines are trained with a GPU and two CPUs.}\vspace*{-1em}
\label{tab:normalized_performance}
 \resizebox{\linewidth}{!}{%
\begin{tabular}{@{}lllllllllr}
\toprule
Dataset $\rightarrow$ & \multicolumn{2}{c}{${\gN}\!{\rightarrow}$\ds{8gaussians}} & \multicolumn{2}{c}{\ds{moons}$\rightarrow$\ds{8gaussians}} & \multicolumn{2}{c}{${\gN}\!{\rightarrow}$\ds{moons}} & \multicolumn{2}{c}{${\gN}\!{\rightarrow}$\ds{scurve}} & \multicolumn{1}{c}{Avg. train time}\\ \cmidrule(lr){2-3} \cmidrule(lr){4-5} \cmidrule(lr){6-7} \cmidrule(lr){8-9} \cmidrule(lr){10-10}
Algorithm $\downarrow$ Metric $\rightarrow$ & \multicolumn{1}{c}{$W_2^2$} & \multicolumn{1}{c}{NPE} & \multicolumn{1}{c}{$W_2^2$} & \multicolumn{1}{c}{NPE} & \multicolumn{1}{c}{$W_2^2$} & \multicolumn{1}{c}{NPE} & \multicolumn{1}{c}{$W_2^2$} & \multicolumn{1}{c}{NPE} & ($\times 10^3$ s) \\
\midrule
OT-CFM   &           1.262\std{0.348} &  \textbf{0.018\std{0.014}} &  \textbf{1.923\std{0.391}} &  \textbf{0.053\std{0.035}} &  \textbf{0.239\std{0.048}} &  0.087\std{0.061} &  \textbf{0.264\std{0.093}} &  \textbf{0.027\std{0.026}} & 1.129\std{0.335}\\
I-CFM     &           1.284\std{0.384} &           0.222\std{0.032} &           1.977\std{0.266} &           2.738\std{0.181} &           0.338\std{0.109} &           0.841\std{0.148} &           0.333\std{0.060} &           0.867\std{0.117} & \textbf{0.630\std{0.365}}\\\midrule
2-RF \citep{liu_rectified_2022} &           1.436\std{0.344} &           0.069\std{0.027} &           2.211\std{0.423} &           0.149\std{0.101} &           0.278\std{0.026} &           \textbf{0.076\std{0.067}} &           0.395\std{0.111} &           0.112\std{0.085} & 0.862\std{0.166}\\
3-RF \citep{liu_rectified_2022} &           1.337\std{0.367} &           0.055\std{0.043} &           2.700\std{0.587} &           0.123\std{0.112} &           0.305\std{0.026} &           0.084\std{0.051} &           0.395\std{0.082} &           0.129\std{0.075} & 0.954\std{0.116} \\

FM  \citep{lipman_flow_2022}      &           1.062\std{0.196} &           0.174\std{0.030} &                         --- &                         --- &           0.246\std{0.077} &           0.778\std{0.144} &           0.377\std{0.099} &           0.772\std{0.081} & 0.708\std{0.370}\\
\midrule
Reg. CNF \citep{finlay_how_2020}&           1.144\std{0.075} &           0.274\std{0.060} &                         --- &                         --- &           0.376\std{0.040} &           0.620\std{0.088} &           0.581\std{0.195} &           0.586\std{0.503} & 8.021\std{3.288}\\
CNF \citep{chen_neural_2018}      &  \textbf{1.055\std{0.059}} &           0.151\std{0.064} &                         --- &                         --- &           0.387\std{0.065} &           2.937\std{1.973} &           0.645\std{0.343} &          10.548\std{8.100} & 18.810\std{12.677}\\
ICNN \citep{makkuva_optimal_2020}     &           1.771\std{0.398} &           0.747\std{0.029} &           2.193\std{0.136} &           0.832\std{0.004} &           0.532\std{0.046} &           0.267\std{0.010} &           0.753\std{0.068} &           0.344\std{0.045} & 2.912\std{0.626}\\
\bottomrule
\end{tabular}

}
\end{table}

\subsection{Low-dimensional data: Optimal transport and faster convergence}
\label{sec:results_ot}

We evaluate how well various models perform dynamic optimal transport and generative modeling in low dimensions. We train ODEs mapping between four pairs of two-dimensional datasets: between a standard Gaussian and \ds{8gaussians}, \ds{moons}, and \ds{scurve} and between \ds{moons} and \ds{8gaussians}.

\paragraph{OT-CFM {approximates} dynamic OT.}
To measure how well a model solves the OT problem we use normalized path energy (NPE), defined via the 2-Wasserstein distance as $\mathrm{NPE}(v_\theta) = |{\rm PE}(v_\theta) - W^2_2(q_0, q_1)| / W^2_2(q_0, q_1)$, where the path energy (PE) is $\mathrm{PE}(v_\theta) = \E_{x(0) \sim q(x_0)} \int_0^1 \|v_\theta(t, x(t))\|^2 dt$. \autoref{tab:normalized_performance} summarizes our results showing that OT-CFM flows generalize better to the test set and are very close to the dynamic OT paths as measured by normalized path energy. We find transforming \ds{moons}$\leftrightarrow$\ds{8gaussians} to be particularly challenging to learn for I-CFM as compared to OT-CFM; the learned paths are depicted in \autoref{fig:matches} (bottom).
Although OT-CFM uses a minibatch OT map, we find that OT-CFM requires surprisingly small batches to approximate the OT map well, suggesting some generalization advantages of the network optimization (\autoref{fig:cfm_batchsize}).

\begin{figure}[t]

     \centering
     \includegraphics[width=0.41\columnwidth,trim=10 20 10 20,clip]{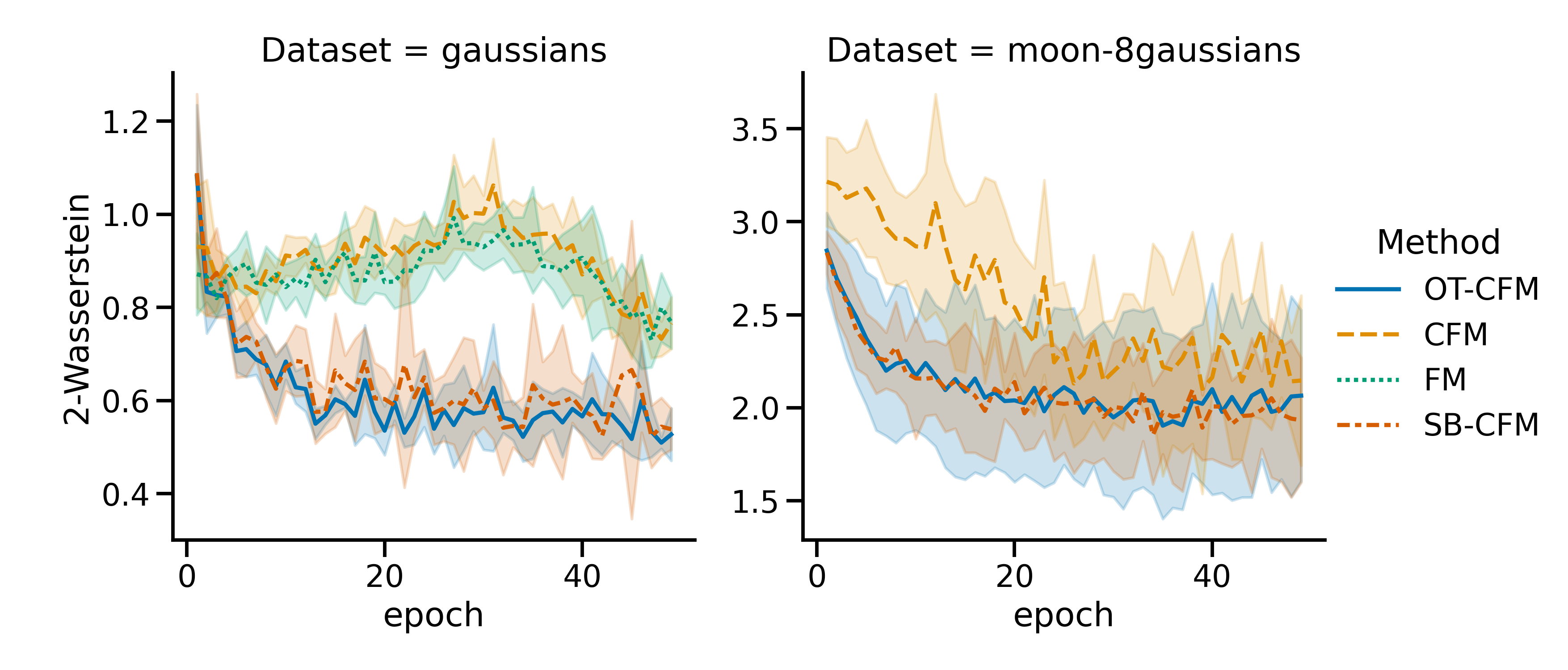}
     \hfill
     \includegraphics[width=0.58\columnwidth,trim=10 20 10 20,clip]{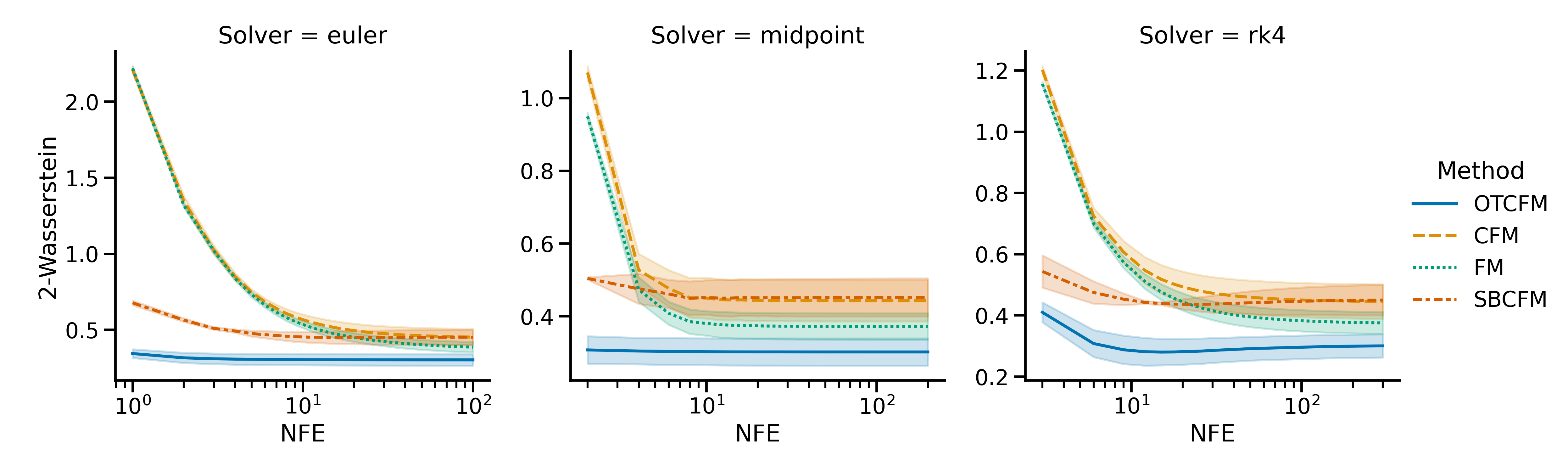}
     \caption{\textbf{Left:} OT-CFM trains faster, in terms of validation set error, than CFM and FM models. \textbf{Right:} With different ODE integrators, OT-CFM reduces the error for a fixed number of function evaluations during inference.}
      \label{fig:lowdim}
\end{figure}

\paragraph{OT-CFM yields faster training.} By conditioning on minibatch optimal transport flows, OT-CFM is substantially easier to train, which we posit is due to the variance reduction of the conditional flow. In \autoref{fig:lowdim} (left), we evaluate the performance over time of OT-CFM against CFM and FM objectives. For the same number of steps OT-CFM has better performance on the validation set. In \autoref{tab:time_comparison}, we compare the training times for various Neural OT methods whose performance can be seen in \autoref{tab:normalized_performance}. Simulation-free optimization is significantly faster to train with equal or superior performance.

\paragraph{OT-CFM yields faster inference.} We next evaluate the quality of samples during inference time. In \autoref{fig:lowdim} (right), we compare the quality of samples for different number of function evaluations (NFEs) across different flow matching objectives. In this experiment we sample from the source distribution test set and simulate the ODE over time for different solvers. We find that OT-CFM consistently requires fewer evaluations to achieve the same quality and achieves better quality with the same NFEs. This is consistent with previous work, which found OT paths lead to faster, higher quality inference in regularized CNFs~\citep{finlay_how_2020,onken_ot-flow_2021} and flow matching vs.\ standard variance-preserving and variance-exploding probability paths~\citep{lipman_flow_2022}. 

\begin{table}[t]
\centering
\begin{minipage}[c]{0.49\columnwidth}
\caption{Schr\"odinger bridge flow comparison, showing average error over flow time to ground truth averaged over 5 models for SB-CFM and 5 dynamics from DSB \citep{de_bortoli_diffusion_2021}.}\vspace*{-1em}
\label{tab:sb_comparison}
\end{minipage}\hfill
\begin{minipage}[c]{0.49\columnwidth}
\hfill
\resizebox{1\linewidth}{!}{
\begin{tabular}{@{}lll}
\toprule
 Dataset $\downarrow$ Alg. $\rightarrow$ &                      \multicolumn1c{SB-CFM} &                \multicolumn1c{DSB} \\
\midrule
${\gN}\!{\rightarrow}$\ds{8gaussians}       &  \textbf{0.454 $\pm$ 0.164} &  1.440 $\pm$ 0.720 \\
\ds{moons}$\rightarrow$\ds{8gaussians}&  \textbf{1.377 $\pm$ 0.229} &  2.407 $\pm$ 1.025 \\
${\gN}\!{\rightarrow}$\ds{moons}           &  \textbf{0.283 $\pm$ 0.048} &  0.333 $\pm$ 0.129 \\
${\gN}\!{\rightarrow}$\ds{scurve}           &  \textbf{0.297 $\pm$ 0.064} &  0.383 $\pm$ 0.134 \\
\bottomrule
\end{tabular}
}
\end{minipage}
\end{table}

\paragraph{SB-CFM reproduces Schr\"odinger bridge flows.} There are a number of methods which theoretically converge to a Schr\"odinger bridge between two datasets. In \autoref{tab:sb_comparison} we compare SB-CFM and the diffusion Schr\"odinger bridge (DSB) method introduced in \citet{de_bortoli_diffusion_2021} on the quality of the learnt Schr\"odinger bridges based on the average 2-Wasserstein distance to ground truth Schr\"odinger bridge samples over 18 time steps. Furthermore, SB-CFM is also significantly faster than DSB (\autoref{tab:time_comparison}).

\subsection{Application to single-cell interpolation} 

As a specific application, we consider the task of single-cell trajectory interpolation. In this task we use leave-one-out validation over the timepoints. From times data at times $[0,t-1], [t+1,T]$ we try to interpolate its distribution at time $t$ following the setup of \citet{schiebinger_optimal-transport_2019,tong_trajectorynet_2020,huguet_manifold_2022}. Low error means we model individual cells well, which is useful in a number of downstream tasks such as gene regulatory network inference~\citep{aliee_beyond_2021,yeo_generative_2021}. Following \citet{huguet_geodesic_2022}, we repurpose the CITE-seq and Multiome datasets from a recent NeurIPS competition for this task~\citep{burkhardt_multimodal_2022}. We also include the Embryoid body data from \citet{moon_visualizing_2019,tong_trajectorynet_2020}. \autoref{tab:eb_comparison} shows the average earth mover's distance (1-Wasserstein) on left--out timepoints for three datasets. On all three datasets OT-CFM outperforms other methods and baselines on average.

\begin{table}[t]
\centering
\begin{minipage}[c]{0.3\columnwidth}
\caption{Single-cell comparison over three datasets averaged over leaving out intermediate timepoints measuring EMD to left out distribution following \citet{tong_trajectorynet_2020}. *Indicates values taken from aforementioned work.}\vspace*{-1em}
\label{tab:eb_comparison}
\end{minipage}\hfill
\begin{minipage}[c]{0.68\columnwidth}
\hfill
\resizebox{1\linewidth}{!}{
\begin{tabular}{@{}llll}
\toprule
Algorithm $\downarrow$ Dataset $\rightarrow$ &                        \multicolumn1c{Cite} &                          \multicolumn1c{EB} &                       \multicolumn1c{Multi} \\
\midrule
TrajectoryNet \citep{tong_trajectorynet_2020}*   &                         --- &           0.848 $\pm$ --- &                         --- \\
Reg. CNF \citep{finlay_how_2020}*  &                         --- &           0.825 $\pm$ --- &                         --- \\
DSB \citep{de_bortoli_diffusion_2021}     &           0.953 $\pm$ 0.140 &           0.862 $\pm$ 0.023 &           1.079 $\pm$ 0.117 \\\midrule
I-CFM      &           0.965 $\pm$ 0.111 &           0.872 $\pm$ 0.087 &           1.085 $\pm$ 0.099 \\
SB-CFM   &           1.067 $\pm$ 0.107 &           1.221 $\pm$ 0.380 &           1.129 $\pm$ 0.363 \\
OT-CFM   &  \textbf{0.882 $\pm$ 0.058} &  \textbf{0.790 $\pm$ 0.068} &  \textbf{0.937 $\pm$ 0.054} \\
\bottomrule
\end{tabular}
}
\end{minipage}
\end{table}

\begin{figure}[t!]
    \centering
    \includegraphics[width=0.48\linewidth]{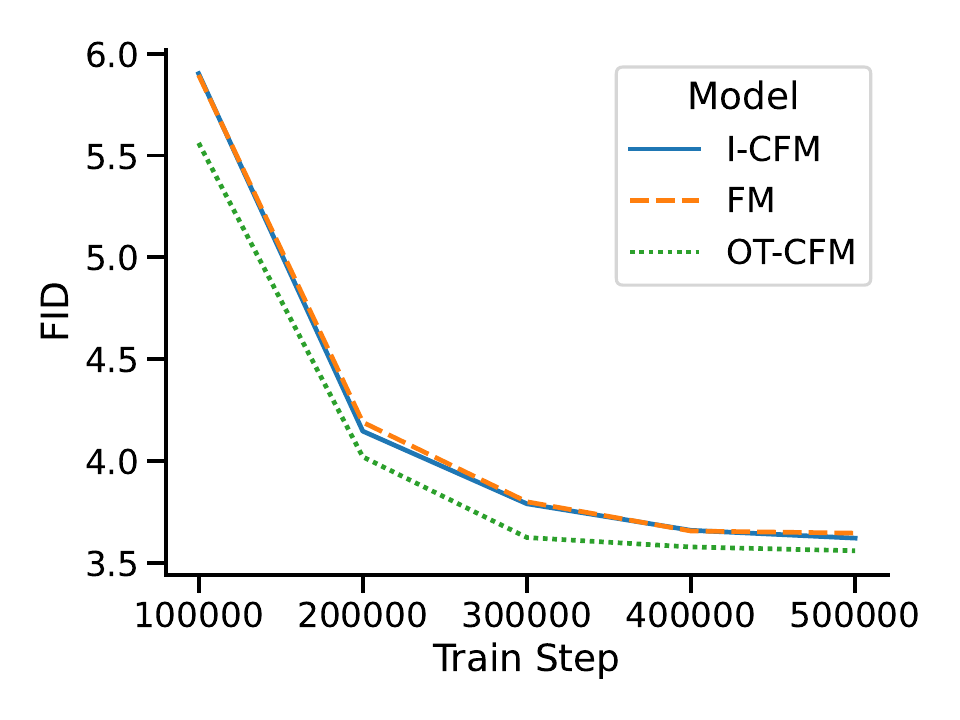}
    \includegraphics[width=0.48\linewidth]{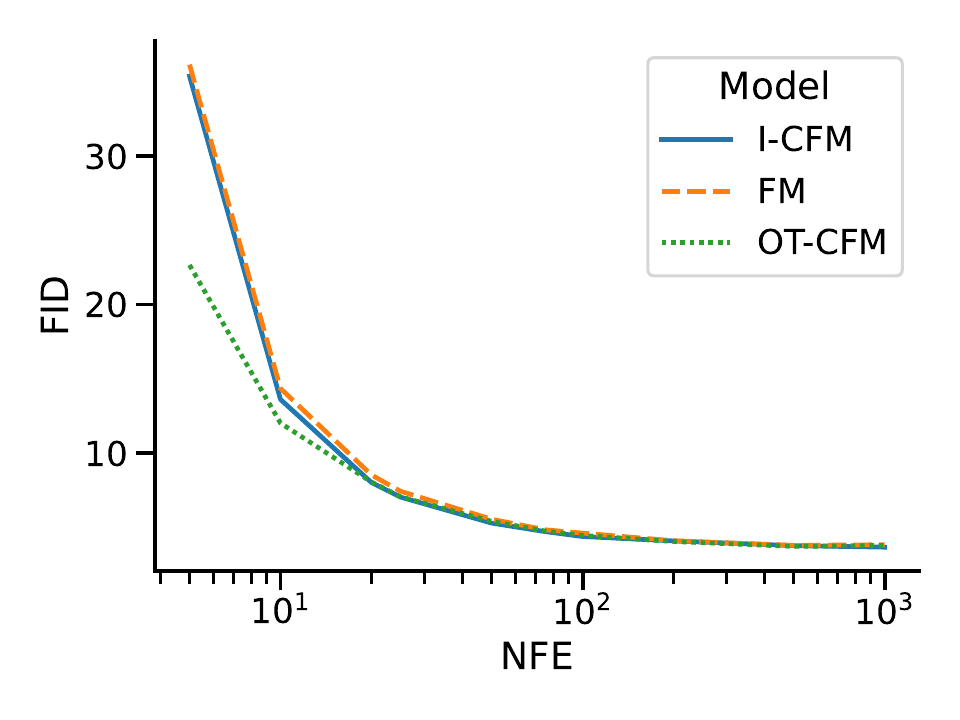}
    \caption{\textbf{Left:} Fr\'echet inception distance (FID) scores on CIFAR-10 for different numbers of training steps using a \textsc{dopri5} adaptive solver. \textbf{Right:} FID scores on CIFAR-10 using Euler integration for various numbers of function evaluations (NFE) per sample after 400k training steps. In both cases, OT-CFM outperforms I-CFM and FM models, showing the benefits of minibatch optimal transport.}
    \label{fig:cifar}
\end{figure}
\subsection{High-dimensional data: Lower-cost training and inference}

We perform an experiment on unconditional CIFAR-10 generation from a Gaussian source to examine how OT-CFM performs in the high-dimensional image setting. We use a similar setup to that of \cite{lipman_flow_2022}, including the time-dependent U-Net architecture from~\citet{nichol_improved_2021} that is commonly used in diffusion models. We were not able to reproduce the results reported from \citet{lipman_flow_2022} with the parameters specified in the paper.
\footnote{Specifically, we find that the number generated samples for FID calculation, the value of the smoothing constant $\sigma_{\rm min}$, any data augmentation used, the standard deviation of the distribution $p_0$, and the batch size used during evaluation (which can affect the function evaluation count with adaptive integrators) are not specified in \citet{lipman_flow_2022}'s manuscript. In addition, contradictory information is given about the number of training epochs.} Therefore, we selected different training hyperparameters.

The main differences with \cite{lipman_flow_2022} are that we use a constant learning rate, set to $2\times10^{-4}$, instead of a linearly decreasing one (from $5\times10^{-4}$ to $10^{-8}$). To prevent training instabilities and variance, we clip the gradient norm to 1 and rely on exponential moving average with a decay of 0.9999. Regarding the architecture, we used the same as \cite{lipman_flow_2022}, but with a smaller number of channels (128 instead of 256), leading to much faster training, as well as 10\% dropout. Furthermore, our batch size was 128 instead of 256, which leads to a reduced memory cost. 

We train our OT-CFM, as well as I-CFM and the original FM, with this new training procedure and report the Fr\'echet inception distance (FID) in \autoref{tab:cifar}. In~\autoref{fig:cifar} (left), we show the FID over training time with the Dormand-Prince fifth-order adaptive solver~\citep[\textsc{dopri5}]{hairer1993solving} using a relative and absolute error threshold of $10^{-5}$ similarly to \cite{lipman_flow_2022}, and in~\autoref{fig:cifar} (right), we present the FID as a function of the numbers of function evaluations (NFE) using Euler integration.

We find that:
\begin{itemize}[nosep,left=0pt]
\item With improved hyperparameters, we achieve a significantly better FID with the FM training objective than the one reported by \cite{lipman_flow_2022} at a lower cost.
\item For a short computation budget, OT-CFM outperforms FM and (non-OT) I-CFM (\autoref{tab:cifar}, left).
\item After a long training time, all methods achieve similar performance at a high number of function evaluations using fixed-step ODE integration, but OT-CFM performs significantly better with a small number of function evaluations (\ie, allows more efficient inference), indicating straighter, easily integrable flows (\autoref{tab:cifar}, right).
\item FM and I-CFM are equivalently computationally efficient per iteration and OT-CFM comes with a low ($<$1\%) computational overhead during training.
\end{itemize}

Our new training procedures, available at \url{https://github.com/atong01/conditional-flow-matching}, allow us to outperform the previous reported results from \cite{lipman_flow_2022}, while the results with our OT-CFM are state-of-the-art for simulation-free neural ODE training algorithms.

\begin{table}[t!]
    \centering
    \begin{minipage}{0.44\textwidth}
        \caption{FID score and number of function evaluations (NFE) for different ODE solvers: fixed-step Euler integration with 100 and 1000 steps and adaptive integration (\textsc{dopri5}). The adaptive solver is significantly better than the Euler solver in fewer steps. {First three results are from \cite{lipman_flow_2022} and fourth from \citet{albergo_building_2023}}. The fifth line is our reproduced results following \cite{lipman_flow_2022}'s training procedure. We have run OT-FM, S.I. and VP-FM following our training procedure and we have denoted them (ours). The two last rows report the results of our proposed methos I-CFM and OT-CFM.}
    \label{tab:cifar}
    \end{minipage}\hfill
    \begin{minipage}{0.54\textwidth}\hfill
\begin{tabular}{@{}lllll}
\toprule
      NFE / sample $\rightarrow$  & \multicolumn1c{100} & \multicolumn1c{1000} & \multicolumn{2}{c}{Adaptive}\\\cmidrule(lr){2-2}\cmidrule(lr){3-3}\cmidrule(lr){4-5}
      Algorithm $\downarrow$  & \multicolumn1c{FID} & \multicolumn1c{FID} & \multicolumn1c{FID} & \multicolumn1c{NFE} \\
\midrule
{DDPM} & & & {7.48} & {274} \\
OT-FM  (reported) &  &  &           6.35 & 142 \\
VP-FM  (reported) &  &  &           8.06 & 183 \\
S.I.\  (reported) &  &  &           10.27 & \\

OT-FM (reproduced) & 13.742 & 12.491 & 11.527 & 139.83 \\
\midrule
VP-FM (ours) & {7.772} & {4.048} &  {4.335}          & {525.92}\\
OT-FM (ours) & {4.640} & {3.822} &           3.655 & 143.00  \\
S.I.\ (ours) & {4.488} & {4.132} &     {4.009}       & {146.12} \\
\midrule
\midrule
I-CFM (ours)  & {4.461} & \textbf{3.643} &           3.659 & 146.42 \\
OT-CFM (ours) & \textbf{4.443} & {3.741} &  \textbf{3.577} & \textbf{133.94}\\
\bottomrule
\end{tabular}
\end{minipage}
\end{table}

\subsection{OT-CFM for unsupervised translation}
\label{sec:exp_vae}

We show how CFM can be used to learn a mapping between two unpaired datasets in high-dimensional space using the CelebA dataset \citep{celeb,celeba}, which consists of $\sim200$k images of faces together with 40 binary attribute annotations. For each attribute, we wish to learn an invertible mapping between images with and without the attribute (\eg, `not smiling'$\leftrightarrow$`smiling').

To reduce dimensionality, we first train a VAE on the images and encode them as 128-dimensional latent vectors. For each attribute, we learn a flow to map between the embeddings of images without the attribute and those of images with the attribute. After the CNF is learned, we push forward a held-out set of negative vectors by the CNF and compare them to the held-out positive vectors and vice versa. As a metric of divergence, we use maximum mean discrepancy (MMD) with a broad Gaussian kernel ($\exp(-\|x-y\|^2/(2\cdot128))$). The results aggregated over all attributes are shown in \autoref{tab:vae_results}, showing that OT-CFM discovers a better mapping than other methods. Although MMD is lower for larger $\sigma$, we found that the alignment is less natural when $\sigma$ is large, and performance begins to degrade when $\sigma>1$. \autoref{fig:vae_interpolation} shows several visualizations of the learned trajectories.

Finally, while here we work in a latent space, future work should consider learning flows directly in image space, where GAN-based approaches \citep{cyclegan} continue to dominate. 

\begin{table}[t]
\centering
\begin{minipage}[c]{0.49\columnwidth}
\caption{MMD (in units of $10^{-3}$) between target and transformed source samples of CelebA latent vectors. Mean and standard deviation over 40 attributes and both translation directions ($-\leftrightarrow+$) for each attribute. `Identity' refers to performing no translation and treating source samples as approximate samples from the target.}\vspace*{-1em}
\label{tab:vae_results}
\end{minipage}\hfill
\begin{minipage}[c]{0.49\columnwidth}
\centering
\resizebox{1\linewidth}{!}{
\begin{tabular}{@{}lccc}
\toprule
Algorithm $\downarrow$
& $\sigma=0.1$ & $\sigma=0.3$ & $\sigma=1$\\\midrule
Identity & $9.17 \pm 5.68$ & $9.17 \pm 5.68$ &  $9.17 \pm 5.68$
\\\midrule
I-CFM & $4.85 \pm 5.09$ & $3.44 \pm 2.03$ & $1.59 \pm 0.83$
\\
OT-CFM & $2.81 \pm 2.62$ & $1.91 \pm 1.30$ & $1.04 \pm 0.60$
\\\bottomrule
\end{tabular}
}
\end{minipage}
\end{table}

\subsection{Additional experiments and extensions}

We present numerous other extensions, applications, and evaluations of CFM in \autoref{sec:more_exp}, notably:

\paragraph{OT-CFM reduces variance in the regression target.} To accompany the theoretical results in \autoref{sec:vraab}, in \autoref{sec:exp_variance} we empirically study the variance of the stochastic regression objective in (OT-)CFM. The results suggest an explanation for the faster convergence of models trained with OT-CFM.

\paragraph{Energy-based CFM.} In \autoref{sec:ebcfm} and \autoref{sec:exp_ebm} we show how CFM can be used to fit samplers for unnormalized density functions, where exact samples from $q(x_0)$ or $q(x_1)$ are not available.

\paragraph{Extension to stochastic dynamics.} \citet{singlecell} extends CFM to allow learning \emph{stochastic} dynamics from unpaired source and target data.

\section{Conclusion}

We have introduced a novel class of simulation-free objectives for learning continuous-time flows with a general source distribution. Our approach to training continuous normalizing flows and conditional flow models does not require integration over time during training. We have shown that lifting the static optimal transport problem to the dynamic setting leads to simulation-free solutions to the dynamic OT and SB problems, while also allowing more efficient training and inference of flow models by lowering variance of the objective and straightening flows. 
One limitation of CFM is that it requires closed-form conditional flows, which hinders its application to situations where we want to regularize the marginal vector field $u_t(x)$ based on prior information~\citep{tong_trajectorynet_2020}. In addition, the minibatch approximation to OT can incur error in high dimensions; subsequent work can consider the use of neural-network approximations to OT maps \citep{neural_ot,kernel_neural_ot} in conjunction with CFM. We expect future work to overcome these limitations and hope that ideas from conditional flow matching will improve high-dimensional generative models. 

\section*{Contribution statement} A.T.\ initially conceived the idea. Y.Z., G.H., and N.M.\ led the development of the theory. High-dimensional experiments and open-source code were led by A.T.\ and K.F. Additional experiments were contributed by Y.Z., J.R., G.H., N.M., and A.T. All authors contributed to designing the experiments. N.M.\ and A.T.\ drove the writing of the paper, with contributions from all other authors. G.W.\ and Y.B.\ guided the project.

\section*{Acknowledgments}

We would like to thank Stefano Massaroli for productive conversations as well as thank Xinyu Yuan, Marco Jiralerspong, Tara Akhound-Sadegh and Joey Bose for their helpful comments and feedback on the manuscript. We are also grateful to the anonymous reviewers for suggesting numerous improvements. This research was enabled in part by compute resources provided by Mila (mila.quebec) and NVIDIA Corporation. The authors acknowledge funding from CIFAR, Genentech, Samsung, and IBM. In addition, K.F.\ acknowledges funding from NSERC (RGPIN-2019-06512) and G.W.\ acknowledges funding from NSERC Discovery grant 03267 and NIH grant R01GM135929. 

\bibliography{tidy}
\bibliographystyle{style/tmlr}

\appendix

\counterwithin{figure}{section}
\counterwithin{table}{section}

\section{Proofs of theorems}\label{sec:proofs}
\printProofs

\section{Additional theoretical results}\label{sec:more_theory}

\begin{proposition}\label{lem:scfm}
For any $\sigma \in \R_+$ conditional flow matching with conditional probability paths given by (\ref{eq:scfm:pt}) has an equivalent marginal probability flow $p_t(x)$ to \citet{lipman_flow_2022}'s flow matching.
\end{proposition}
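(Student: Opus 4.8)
The plan is to reduce the claim to one elementary Gaussian identity. By \autoref{thm:marginal_vec}, the marginal vector field built from (\ref{eq:mvec}) generates the marginal path $p_t(x)$ of (\ref{eq:ppath}); hence, to establish the asserted equivalence, it suffices to show that — taking $q(z)=\mathcal{N}(x_0\mid 0,\mI)\,q(x_1)$ as in the paragraph ``Connection to FM from the Gaussian'' — the $p_t(x)$ produced by the conditionals (\ref{eq:scfm:pt}) is identical, for every $t\in[0,1]$ and $x\in\R^d$, to the marginal path of \citet{lipman_flow_2022}'s flow matching from the Gaussian (\autoref{sec:fm_from_gaussian}).

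First I would write both marginals as mixtures. For the paths (\ref{eq:scfm:pt}), (\ref{eq:ppath}) gives
\begin{equation*}
p_t(x) = \iint \mathcal{N}\!\bigl(x \mid t x_1 + (1-t) x_0,\; \sigma^2 t^2 + 2\sigma t(1-t)\bigr)\, \mathcal{N}(x_0 \mid 0, \mI)\, q(x_1)\, dx_0\, dx_1,
\end{equation*}
while for the construction of \citet{lipman_flow_2022}, (\ref{eq: fm_density_vectorfield}) gives $p_t^{\rm FM}(x) = \int \mathcal{N}(x \mid t x_1,\, (t\sigma - t + 1)^2)\, q(x_1)\, dx_1$. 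Next I integrate out the auxiliary variable $x_0$: conditionally on $x_1$, the new construction draws $x = t x_1 + (1-t)x_0 + \xi$ with $x_0 \sim \mathcal{N}(0,\mI)$ and $\xi \sim \mathcal{N}\bigl(0,(\sigma^2 t^2 + 2\sigma t(1-t))\mI\bigr)$ independent, so $x\mid x_1$ is Gaussian with mean $t x_1$ and variance $(1-t)^2 + \sigma^2 t^2 + 2\sigma t(1-t)$ (the convolution of two Gaussians, i.e.\ the law of total variance). The proof then comes down to the algebraic identity
\begin{equation*}
(1-t)^2 + \sigma^2 t^2 + 2\sigma t(1-t) = 1 + 2t(\sigma-1) + t^2(\sigma-1)^2 = \bigl(1 + t(\sigma-1)\bigr)^2 = (t\sigma - t + 1)^2,
\end{equation*}
which shows $x\mid x_1 \sim \mathcal{N}(x \mid t x_1, (t\sigma - t+1)^2)$, exactly the conditional in \autoref{sec:fm_from_gaussian}. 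Integrating both sides against $q(x_1)$ yields $p_t(x) = p_t^{\rm FM}(x)$; in particular the boundary marginals $p_0 = \mathcal{N}(0,\mI)$ and $p_1 = q_1 \ast \mathcal{N}(0,\sigma^2)$ coincide.

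There is no genuine obstacle here: the argument is a completion of the square, and the only point needing a word of care is the endpoint $t=0$, where the conditional $p_0(x\mid x_0,x_1)$ degenerates to the Dirac mass $\delta_{x_0}$; this is handled either by integrating $\delta_{x_0}$ against $\mathcal{N}(x_0\mid 0,\mI)$ directly, or by continuity of $t\mapsto p_t$, both recovering the common initial density $\mathcal{N}(x\mid 0,\mI)$. If one wanted the stronger statement that the marginal \emph{vector fields} agree as well, it could be obtained by substituting the $u_t(x\mid z)$ of (\ref{eq:gaussian_ut}) into (\ref{eq:mvec}) and comparing with the FM field, but this is not required for the proposition as phrased, which concerns only the marginal path $p_t(x)$.
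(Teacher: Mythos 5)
Your proposal is correct and follows essentially the same route as the paper: both reduce the claim to showing that, after marginalizing out $x_0\sim\mathcal{N}(0,\mI)$, the conditional $p_t(x\mid x_1)$ is Gaussian with mean $t x_1$ and variance $(1-t)^2 + \sigma^2 t^2 + 2\sigma t(1-t) = (t\sigma - t + 1)^2$, matching the FM conditional of \autoref{sec:fm_from_gaussian} (the paper phrases this via a location-scale representation of the Gaussian rather than your convolution/law-of-total-variance wording, but the computation is identical). Your extra remark on the degenerate $t=0$ endpoint and on the scope of the statement (paths versus vector fields) is a harmless refinement, not a deviation.
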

\begin{proof}
To prove the proposition, we use the fact that the Gaussian family can be generated by location-scale transformations~\citep[see, \eg,][]{lehmann2006theory}, \ie, we can express any Gaussian $Z_0\sim\gN(\mu_0,\sigma_0^2)$ as $Z_0 = \mu_0 + \sigma_0 Z$ where $Z\sim\gN(0,\mI)$. Recall that the density $p_t(x)$ has the form $p_t(x) = \int p_t(x|z)q(z)dz$, to show the equivalence between the flow from FM and source conditional flow matching, we have to show that $p_t(x|x_1)$ is the same for both methods, that is we show that CFM with variance $(\sigma t)^2 + 2 \sigma t (1-t)$ is equivalent to FM with variance $(t \sigma - t + 1)^2$ (\ref{eq: fm_density_vectorfield}). Since $p_t(x|x_0,x_1)$ is $\gN(tx_1 + (1-t)x_0), \sigma t (\sigma t - 2t + 2))$ we can write the random variable $X|X_0,X_1$ as 
\begin{equation*}
    X|X_0,X_1 = tx_1 + (1-t)x_0 +  (\sigma t (\sigma t - 2t + 2))^{1/2}Z,
\end{equation*}
where $Z\sim\gN(0,\mI)$. Without conditioning on $X_0$, we have
\begin{equation*}
    X|X_1 = tx_1 + (1-t)X_0 +  (\sigma t (\sigma t - 2t + 2))^{1/2}Z.
\end{equation*}
By assumption $X_0\sim\gN(0,\mI)$, thus $X|X_1$ is Gaussian, since a linear transformation of Gaussian distributions is also Gaussian. To define its distribution, we only have to define its expectation and variance. By linearity of expectation, we find $\mathbf{E}(X|X_1) = tx_1$, and by independence of $X_0$ and $Z$ we have
\begin{align*}
    \mathrm{Var}(X|X_1) &=  (1-t)^2\mathrm{Var}(X_0) +  (\sigma t (\sigma t - 2t + 2))\mathrm{Var}(Z) \\
                        &= (1-t)^2 + (\sigma t (\sigma t - 2t + 2)) = (t \sigma - t + 1)^2,
\end{align*}
hence the flow from source conditional flow matching is the same as FM. 
\end{proof}

\begin{proposition}\label{vrmodels:ut-otcfm}
If $\pi$ is a Monge map, the objective variance of OT-CFM goes to zero as $\sigma \to 0$, \ie,
\begin{equation*}
\E_{q(z)}\|u_t(x|z) - u_t(x)\|^2\to 0 \,\,\,\, \text{as } \sigma \to 0
\end{equation*}
for $u_t(x | z)$ in (\ref{eq:cfm:ut}).
\end{proposition}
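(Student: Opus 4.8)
The quantity to bound is the irreducible (``variance'') part of the CFM loss when the conditional field is (\ref{eq:cfm:ut}); I read the expectation as being over $z\sim q(z)$ and $x\sim p_t(x\mid z)$ (and, for the objective as a whole, also over $t\sim\gU(0,1)$ with $t$ otherwise fixed below), and I write $u_t^\sigma(x)$ for the marginal field to stress that it depends on $\sigma$ through $p_t^\sigma$. The plan is: fix $t$, reduce the inner expectation to a closed form via a law-of-total-variance identity, then let $\sigma\to0$, and close with bounded convergence in $t$. Since $\pi$ is a Monge map $T$ (which is exactly what the regularity hypotheses of \autoref{lem:otcfm} provide, via Brenier's theorem), $q(z)$ is supported on the graph $(x_0,T(x_0))$ with $x_0\sim q_0$, so the conditional field $u_t(x\mid z)=x_1-x_0=T(x_0)-x_0=:u_0(x_0)$ is \emph{independent of $x$} and of $t$ — this is the structural fact that makes the argument go through.

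First I would rewrite the inner expectation. Conditioning on $x$ at time $t$, definition (\ref{eq:mvec}) says precisely that $u_t^\sigma(x)=\E[u_0(x_0)\mid x]$ under the posterior $q(x_0\mid x)\propto q_0(x_0)\,p_t^\sigma\!\big(x\mid(x_0,T(x_0))\big)=q_0(x_0)\,\mathcal N(x\mid\phi_t(x_0),\sigma^2)$. Hence, by the elementary identity $\E\|Y-\E Y\|^2=\E\|Y\|^2-\|\E Y\|^2$,
\[
\E_{q(z)}\E_{p_t^\sigma(x\mid z)}\|u_t(x\mid z)-u_t^\sigma(x)\|^2
=\E_{p_t^\sigma(x)}\big[\,\E[\|u_0(x_0)\|^2\mid x]\,\big]-\E_{p_t^\sigma(x)}\|u_t^\sigma(x)\|^2 .
\]
Because $u_0(x_0)$ does not depend on $x$, the first term telescopes (marginalize out $x$) to $\E_{q_0}\|u_0(x_0)\|^2=\E_{q_0}\|T(x_0)-x_0\|^2=W_2^2(q_0,q_1)=:C$, a finite constant (finite since $q_0,q_1$ are compactly supported) independent of $\sigma$ and $t$. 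So the whole task reduces to showing $\E_{p_t^\sigma(x)}\|u_t^\sigma(x)\|^2\to C$ as $\sigma\to0$ for a.e.\ $t$.

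For that limit I would reuse the pointwise convergence already proven inside \autoref{lem:otcfm}: for a.e.\ $t$ and a.e.\ $x$ one has $u_t^\sigma(x)\to u_0(\phi_t^{-1}(x))$, and $p_t^\sigma=p_t*\mathcal N(0,\sigma^2)\to p_t$ pointwise a.e.\ by the same mollifier argument. Compactness of $\operatorname{supp}q_0$ and $\operatorname{supp}q_1$ forces $p_t$ to be compactly supported, so all $p_t^\sigma$ with $\sigma\le1$ are supported in one fixed compact $K$; moreover $\|u_t^\sigma(x)\|\le M:=\sup_{x_0\in\operatorname{supp}q_0}\|T(x_0)-x_0\|$ (a conditional average of vectors of norm $\le M$), and $p_t^\sigma\le\|p_t\|_\infty$. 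Thus $\|u_t^\sigma\|^2p_t^\sigma\le M^2\|p_t\|_\infty\,\mathbf 1_K$ is a fixed integrable dominating function, so dominated convergence gives $\E_{p_t^\sigma(x)}\|u_t^\sigma(x)\|^2\to\int\|u_0(\phi_t^{-1}(x))\|^2p_t(x)\,dx$, which equals $\int\|u_0(x_0)\|^2q_0(x_0)\,dx_0=C$ by the change of variables $x=\phi_t(x_0)$ together with $p_t=(\phi_t)_\#q_0$ (McCann's interpolation, with $\phi_t$ injective by noncrossing of the Monge paths). A final bounded-convergence step in $t$ (the integrand lies in $[0,C]$) upgrades this to the statement with the outer $\E_t$, and we are done.

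The law-of-total-variance bookkeeping and the identification of $C$ are routine; the step I expect to be the real work is the interchange of the $\sigma\to0$ limit with the integral against the \emph{$\sigma$-dependent} measure $p_t^\sigma(x)\,dx$, i.e.\ proving $\E_{p_t^\sigma}\|u_t^\sigma\|^2\to C$. This is exactly where I lean on the a.e.\ pointwise limit $u_t^\sigma\to u_0\circ\phi_t^{-1}$ inherited from \autoref{lem:otcfm} and on the uniform compact support supplied by the compactness of $\operatorname{supp}q_0,\operatorname{supp}q_1$. The Monge hypothesis is essential: if $\pi$ is genuinely stochastic then $u_t(x\mid z)$ depends on $z$ in a way that is not pinned down by $(t,x)$ even at $\sigma=0$ (crossing conditional paths), the constant-$C$ telescoping no longer coincides with $\lim_\sigma\E_{p_t^\sigma}\|u_t^\sigma\|^2$, and the variance generally does not vanish.
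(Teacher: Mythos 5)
Your argument is correct in substance but follows a genuinely different route from the paper. The paper's own proof is a short posterior-concentration argument: since the Monge paths $t\mapsto tT(x_0)+(1-t)x_0$ do not cross, the conditional densities $p_t(x\mid z)$ for distinct $z$ separate as $\sigma\to 0$ (the paper phrases this as $\KL(p_t(x\mid z^i)\,\|\,p_t(x\mid z^j))\to\infty$), so the posterior over $z$ given $(t,x)$ collapses onto a single conditioning pair and the marginal field $u_t(x)$ coincides with the conditional field $u_t(x\mid z)$ in the limit, making the squared deviation vanish pointwise. You instead decompose the objective variance by the law of total variance, exploit that $u_t(x\mid z)=T(x_0)-x_0$ depends only on $z$ so that the total second moment is the $\sigma$-independent constant $C=\E_{q_0}\|T(x_0)-x_0\|^2=W_2^2(q_0,q_1)$, and reduce everything to showing $\E_{p_t^\sigma}\|u_t^\sigma\|^2\to C$, which you get from the pointwise limit $u_t^\sigma\to u_0\circ\phi_t^{-1}$ already established inside the proof of \autoref{lem:otcfm} plus a limit interchange and the change of variables $p_t=(\phi_t)_\#q_0$. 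Your version is more quantitative and arguably more rigorous than the paper's (it makes explicit exactly which convergence is needed and identifies the limiting second moment), at the cost of importing the regularity hypotheses and the a.e.\ convergence from \autoref{lem:otcfm}; the paper's version is shorter and isolates the single structural fact (non-crossing of Monge paths) but leaves the measure-theoretic details implicit.

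One technical slip: your domination step asserts that all $p_t^\sigma$ with $\sigma\le 1$ are supported in a fixed compact set, which is false — $p_t^\sigma=p_t*\gN(0,\sigma^2)$ has full support on $\R^d$ for every $\sigma>0$. This is repairable, and in fact you do not need dominated convergence at all: conditional Jensen gives $\|u_t^\sigma(x)\|^2\le\E[\|u_0(x_0)\|^2\mid x]$, hence $\E_{p_t^\sigma}\|u_t^\sigma\|^2\le C$ for every $\sigma$, while Fatou applied to the nonnegative integrands $\|u_t^\sigma\|^2p_t^\sigma$ (which converge a.e.\ to $\|u_0\circ\phi_t^{-1}\|^2p_t$, using $p_t^\sigma\to p_t$ a.e.) gives $\liminf_\sigma\E_{p_t^\sigma}\|u_t^\sigma\|^2\ge C$; together these yield the required limit without any dominating function. (Alternatively, a valid dominating function exists — bound $p_t^\sigma$ by $\|p_t\|_\infty$ near the support of $p_t$ and by a fixed Gaussian tail far from it — but the Jensen--Fatou squeeze is cleaner.) With that repair, your proof goes through.
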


\begin{proof}
This follows from a basic fact about the transport plan $\pi$. Specifically, that as $\sigma \to 0$, $\KL(p_t(x | z^i) \| p_t(x | z^j)) \to \infty$ for an $t,x$ for two distinct $z^i$, $z^j$. This means that $p_t(x | z) = p_t(x)$ for any $t,x,z$ therefore 
\begin{align*}
u_t(x) &= \E_{q(z)} u_t(x | z) p_t(x | z) / p_t(x) \\
&= u_t(x | z)
\end{align*}
\end{proof}

\begin{proposition}\label{vrmodels:ut}
The conditional vector field $u_t(x| \bar{\vz})$ defined by (\ref{eq:vrcfm:ut}) converges to marginal vector field $u_t(x)$ defined by (\ref{eq:mvec}) as $m$ goes to population size, \ie,
\begin{equation*}
\|u_t(x|\bar{\vz}) - u_t(x)\|^2\to 0
\end{equation*}
as $m \to |\mathcal{X}|$. 
\end{proposition}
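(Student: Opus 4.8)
The plan is to unwind the definition of the minibatch field $u_t(x\mid\bar{\vz})$ in (\ref{eq:vrcfm:ut}) and observe that as the batch grows to the full population this self-normalized average collapses onto the marginal field (\ref{eq:mvec}). Here $\bar{\vz}=(z^1,\dots,z^m)$ is a size-$m$ sample (drawn without replacement) from the finite data population $\mathcal X$, on which $q(z)$ is uniform, and the intended estimator is
\[
u_t(x\mid\bar{\vz})=\frac{\sum_{i=1}^m p_t(x\mid z^i)\,u_t(x\mid z^i)}{\sum_{i=1}^m p_t(x\mid z^i)},
\]
which is well-defined for every $x$: each conditional path $p_t(\cdot\mid z)$ is a Gaussian density, hence strictly positive, so the denominator never vanishes. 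The first thing to record is therefore just this positivity, which makes the ratio meaningful and continuous in its arguments.

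Next I would handle the terminal case $m=|\mathcal X|$ directly. Then $\{z^1,\dots,z^m\}=\mathcal X$, and since $q$ is uniform the weight $q(z)=1/|\mathcal X|$ may be inserted into both sums without changing the ratio: the numerator becomes $\sum_{z\in\mathcal X} q(z)\,p_t(x\mid z)\,u_t(x\mid z)=\E_{q(z)}[p_t(x\mid z)u_t(x\mid z)]$ and the denominator becomes $\sum_{z\in\mathcal X} q(z)\,p_t(x\mid z)=p_t(x)$ by (\ref{eq:ppath}). By the definition (\ref{eq:mvec}) the ratio is exactly $u_t(x)$, so $\|u_t(x\mid\bar{\vz})-u_t(x)\|^2=0$ identically once $m=|\mathcal X|$, which is the asserted limit. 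If one prefers to read the statement as a genuine limit of an i.i.d.\ (with replacement) estimator as $m\to\infty$, the same conclusion follows by a law-of-large-numbers step: writing the estimator as the ratio of empirical means $\frac1m\sum_i p_t(x\mid z^i)u_t(x\mid z^i)$ and $\frac1m\sum_i p_t(x\mid z^i)$, both converge a.s.\ (and in $L^2$, since under the boundedness assumptions already invoked for \autoref{thm:cmf} the summands are bounded) to $\E_{q(z)}[p_t(x\mid z)u_t(x\mid z)]$ and $p_t(x)>0$ respectively, and continuity of division away from zero carries this to convergence of the ratio to $u_t(x)$.

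There is no real obstacle here: the content is purely that the minibatch vector field is a consistent (self-normalized) estimator of (\ref{eq:mvec}). The only points that need care are the positivity of the normalizing denominator and the bookkeeping that turns a uniform empirical average over the population into the $q(z)$-expectation appearing in (\ref{eq:mvec}) and (\ref{eq:ppath}); if the with-replacement reading is used, the sole extra ingredient is the standard continuous-mapping argument for ratios of sample means with nonvanishing denominator.
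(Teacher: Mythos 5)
Your proposal is correct and follows essentially the same route as the paper: once $m$ reaches the population size, the self-normalized batch sums become sums over all of $\mathcal{X}$, which by (\ref{eq:ppath}) and (\ref{eq:mvec}) equal $\E_{q(z)}[p_t(x\mid z)u_t(x\mid z)]/p_t(x)=u_t(x)$ exactly, so the squared difference is identically zero. Your added remarks on positivity of the denominator and the law-of-large-numbers reading for with-replacement sampling are harmless extras beyond the paper's one-line computation.
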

\begin{proof}
As $|z|\to |\mathcal{X}|$, by definition, 
\begin{align*}
u_t(x | \bar{\vz}) 
    &= \frac{\sum_i^m u_t(x | z^i) p_t(x | z^i) q(z^i)}{\sum_i^m p_t(x | z^i) q(z^i)}\\
    &= \frac{\sum_{z\in \mathcal{X}} u_t(x | z) p_t(x | z) q(z)}{\sum_{z\in \mathcal{X}} p_t(x | z) q(z)}\\
    &= \E_{q(z)} \frac{u_t(x | z) p_t(x | z)}{p_t(x)}\\
    &= u_t(x)   
\end{align*}
\end{proof}

\section{Algorithm extensions}

In \autoref{alg:cfm} we presented the general algorithm for conditional flow matching given $q(z)$, $p_t(x|z)$, $u_t(x|z)$. In \autoref{tab:prob_path_summary} we presented a number of settings of these leading to interesting probability paths. In practice, we may wish to compute $q(z)$ on the fly. Therefore in \autoref{alg:scfm}, \autoref{alg:mbotcfm}, and \autoref{alg:mbsbcfm}, we give algorithms for the simplified conditional flow matching, and minibatch versions of OT conditional flow matching and Schr\"odinger bridge conditional flow matching. In general these consist of first sampling a batch of data from both the source and the target empirical distributions, then resampling pairs of data either randomly (CFM) or according to some OT plan (OT-CFM and SB-CFM).

\subsection{Variance reduction by averaging across batches}\label{sec:vraab}

An interesting consequence of introducing optimal transport to conditional flow matching is that it greatly reduces variance of the regression target. Informally, as $\sigma \to 0$, $\E_{x,t,z} \|u_t(x | z) - u_t(x)\|^2 \to 0$ for OT-CFM and SB-CFM, which is not true of previous probability paths in \autoref{tab:prob_path_summary} (See \autoref{vrmodels:ut-otcfm} for a precise statement). As flow models get larger, more powerful, and more costly, reducing objective variance, and thereby faster training may lead to significant cost savings~\citep{watson_broadly_2022}. To this end we also explore reducing the variance of the objective by averaging over a batch. This is not feasible in score matching where the flow conditioned on multiple datapoints is complex. Our CFM framework naturally extends from a pair of datapoints to a batch of pairs. Instead of conditioning on a single pair of datapoints we can condition on a batch of pairs. As the batch increases in size, we trade higher cost in computing the target for lower variance in the target as the batch size increases, the variance in the target goes to zero (see \autoref{vrmodels:ut} for a precise statement). 

As formalized in \autoref{vrmodels:ut}, we can reduce variance in the target by averaging over multiple datapoints. Specifically, in this case we let $\bar{\vz} \coloneqq \{z^i \coloneqq (x_0^i, x_1^i)\}_{i=1}^m$, where $z^i$ are i.i.d.\ from $q(z)$ and
\begin{align}
p_t(x | \bar{\vz}) &=  \frac{\sum_i^m p_t(x | z^i)q(z^i)}{\sum_i^m q(z^i)} \label{eq:vrcfm:pt}\\
u_t(x | \bar{\vz}) &= \frac{\sum_i^m  u_t(x | z^i) p_t(x | z^i) q(z^i)}{p_t(x | \bar{\vz)}}\label{eq:vrcfm:ut}
\end{align}
It takes roughly $m$ times as long to compute the conditional target $u_t(x | \bar{\vz})$ but reduces the variance. As the evaluation and backpropagation through $v_\theta$ gets more difficult this tradeoff can be beneficial. 

\subsection{Modeling energy functions}\label{sec:ebcfm}

If we have access to an energy function two (unnormalized) energy functions $\mathcal{R}_{\{0,1\}} : \R^d \to \R^+$ at the endpoints instead of i.i.d.\ samples $\mX_t \sim q_t(x_t)$, then the objective must be slightly modified. We formulate the Energy Conditional Flow Matching Objective as
\begin{align}\label{eq:ecfm}
\LECFM = &\E_{t, \hat{q}_0(x_0), \hat{q}_1(x_1), p_t(x | x_0, x_1)} \bigg [ \frac{R_0(x_0) R_1(x_1)}{\hat{q}_0(x_0) \hat{q}_1(x_1)} \|v_\theta(t, x) - u_t(x | \vx_0, \vx_1)\|^2_2 \bigg ]
\end{align}
We can use this object to train a flow which matches the energies without access to samples. This is formalized in the following theorem. 
\begin{proposition}\label{thm:CFM}
Assuming that $\hat{q}_{\{0,1\}}(x), p_t(x) > 0$ for all $x \in \mathcal{X}$ and $t \in [0, 1]$ then the gradients of $\LFM$ and $\LECFM$ with respect to $\theta$ are equal up to some multiplicative constant $c$.
\begin{equation}
\nabla_\theta \LFM(\theta) = c \nabla_\theta \LECFM(\theta)
\end{equation}
\end{proposition}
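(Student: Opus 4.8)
The plan is to recognize $\LECFM$ as nothing more than a constant multiple of the independent-coupling CFM objective of \autoref{sec:pair_conditional_fm}, after which the conclusion is immediate from \autoref{thm:cmf}. The importance weights $R_0(x_0)R_1(x_1)/(\hat q_0(x_0)\hat q_1(x_1))$ are precisely what converts an expectation under the tractable proposal $\hat q_0\hat q_1$ into an (unnormalized) expectation under the true endpoint densities, and the normalization constants reappear as the multiplicative constant $c$.

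First I would set $Z_0 := \int_{\gX} R_0(x)\,dx$ and $Z_1 := \int_{\gX} R_1(x)\,dx$, which are finite and strictly positive since $R_{\{0,1\}}$ are unnormalized densities, so that $q_0 = R_0/Z_0$ and $q_1 = R_1/Z_1$ are the true boundary densities. The hypothesis $\hat q_{\{0,1\}}>0$ on $\gX$ ensures the weights are well defined (and that $\hat q_i$ dominates $R_i$, so the reweighting is valid). Next I would expand the expectation defining $\LECFM$ as an integral against $\hat q_0(x_0)\hat q_1(x_1)\,p_t(x\mid x_0,x_1)$, cancel the proposal densities against the weight, and use $R_0(x_0)R_1(x_1)=Z_0Z_1\,q_0(x_0)q_1(x_1)$, invoking Fubini/Tonelli (under the same mild decay and boundedness assumptions used in the proof of \autoref{thm:cmf}) to reorder the integrals. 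This yields
\[
\LECFM(\theta) = Z_0 Z_1\,\E_{t,\;q_0(x_0)q_1(x_1),\;p_t(x\mid x_0,x_1)}\big\|v_\theta(t,x) - u_t(x\mid \vx_0,\vx_1)\big\|^2 = Z_0 Z_1\,\LCFM(\theta),
\]
where on the right we identify $z=(x_0,x_1)$ with $q(z)=q_0(x_0)q_1(x_1)$, which is exactly the I-CFM setting.

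Finally, \autoref{thm:cmf} (whose hypothesis $p_t(x)>0$ is assumed here) gives $\nabla_\theta\LCFM=\nabla_\theta\LFM$, hence $\nabla_\theta\LECFM = Z_0 Z_1\,\nabla_\theta\LFM$, and the claim holds with $c=(Z_0 Z_1)^{-1}$.

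\textbf{Main obstacle.} There is no real difficulty of ideas here; the only care needed is bookkeeping around the regularity conditions — absolute continuity of $R_i$ with respect to $\hat q_i$ so the weights integrate correctly, finiteness of $Z_i$ so that $c$ is a genuine nonzero constant, and enough decay on $v_\theta,u_t,\nabla_\theta v_\theta$ to pull $\nabla_\theta$ inside the integral — but these coincide with the assumptions already in force for \autoref{thm:cmf}, so nothing new is required.
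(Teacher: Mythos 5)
Your proposal is correct and follows essentially the same route as the paper's proof: introduce the normalization constants $Z_0, Z_1$, cancel the proposal densities against the importance weights to identify $\LECFM = Z_0 Z_1 \LCFM$, and then invoke \autoref{thm:cmf} to transfer the gradient equality to $\LFM$, with $c$ absorbing the normalizers. Your added attention to the regularity bookkeeping (domination of $R_i$ by $\hat q_i$, finiteness of $Z_i$) is a reasonable refinement but not a different argument.
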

\begin{proof}
Let $z_0 = \int_\mathcal{X} R_0(x) dx$, and $z_1 = \int_\mathcal{X} R_1(x) dx$ then $q(x_0) = \mathcal{R}_0(x_0) / z_0$, similarly $q(x_1) = \mathcal{R}_1(x_1) / z_1$, then
\begin{align}\label{eq:gradecfm}
\LECFM(\theta) &= \E_{t, \hat{q}_0(x_0), \hat{q}_1(x_1), p_t(x | x_0, x_1)} \bigg [ \frac{\mathcal{R}_0(x_0) \mathcal{R}_1(x_1)}{\hat{q}_0(x_0) \hat{q}_1(x_1)} \|v_\theta(t, x) - u_t(x | \vx_0, \vx_1)\|^2_2 \bigg ]\\
&= z_0 z_1 \E_{t, \hat{q}_0(x_0), \hat{q}_1(x_1), p_t(x | x_0, x_1)} \bigg [ \frac{q_0(x_0) q_0(x_1)}{\hat{q}_0(x_0) \hat{q}_1(x_1)} \|v_\theta(t, x) - u_t(x | \vx_0, \vx_1)\|^2_2 \bigg ]\\
&= z_0 z_1 \int_{t, x_0, x_1, x} \bigg [ q_0(x_0) q_1(x_1)\|v_\theta(t, x) - u_t(x | \vx_0, \vx_1)\|^2_2 \bigg ] p_t(x | x_0, x_1) d x_0 d x_1 dx \\
&= z_0 z_1 \LCFM(\theta)
\end{align}
where we use substitution for the first step and change the order of integration in the last step. With an application of \autoref{thm:cmf} the gradients are equivalent up to a factor of $z_0 z_1$ which does not depend on $x$.
\end{proof}
Of course $\LECFM$ leaves the question of sampling open for high-dimensional spaces. Sampling uniformly does not scale well to high dimensions, so for practical reasons we may want a different sampling strategy.

We use this objective in \autoref{fig:ecfm} with a uniform proposal distribution as a toy example of this type of training.

\begin{algorithm}[t]
  \caption{Simplified Conditional Flow Matching (I-CFM)}
  \label{alg:scfm}
\begin{algorithmic}
\State {\bfseries Input:} Empirical or samplable distributions $q_0,q_1$, bandwidth $\sigma$, batchsize $b$, initial network $v_{\theta}$.
\While{Training}
\LComment{Sample batches of size $b$ \textit{i.i.d.} from the datasets}
\State $\vx_0 \sim q_0(\vx_0); \quad \vx_1 \sim q_1(\vx_1)$
\State $t \sim \mathcal{U}(0, 1)$
\State $\mu_t \gets t \vx_1 + (1 - t) \vx_0$
\State $x \sim \mathcal{N}(\mu_t, \sigma^2 I)$
\State $\LCFM(\theta) \gets \| v_\theta(t, x) - (\vx_1 - \vx_0)\|^2$
\State $\theta \gets \mathrm{Update}(\theta, \nabla_\theta \LCFM(\theta))$
\EndWhile
\State \Return $v_\theta$
\end{algorithmic}
\end{algorithm}

\begin{algorithm}[t]
  \caption{Minibatch OT Conditional Flow Matching (OT-CFM)}
  \label{alg:mbotcfm}
\begin{algorithmic}
\State {\bfseries Input:} Empirical or samplable distributions $q_0,q_1$, bandwidth $\sigma$, batch size $b$, initial network $v_{\theta}$.
\While{Training}
\LComment{Sample batches of size $b$ \textit{i.i.d.} from the datasets}
\State $\vx_0 \sim q_0(\vx_0); \quad \vx_1 \sim q_1(\vx_1)$
\State $\pi \gets \mathrm{OT}(\vx_1, \vx_0)$
\State $(\vx_0, \vx_1) \sim \pi$
\State $\vt \sim \mathcal{U}(0, 1)$
\State $\mu_t \gets \vt \vx_1 + (1 - \vt) \vx_0$
\State $\vx \sim \mathcal{N}(\mu_t, \sigma^2 I)$
\State $\LCFM(\theta) \gets \| v_\theta(\vt, \vx) - (\vx_1 - \vx_0)\|^2$
\State $\theta \gets \mathrm{Update}(\theta, \nabla_\theta \LCFM(\theta))$
\EndWhile
\State \Return $v_\theta$
\end{algorithmic}
\end{algorithm}

\begin{algorithm}[t]
  \caption{Minibatch Schr\"odinger Bridge Conditional Flow Matching (SB-CFM)}
  \label{alg:mbsbcfm}
\begin{algorithmic}
\State{\bfseries Input:} Empirical or samplable distributions $q_0,q_1$, bandwidth $\sigma$, batch size $b$, initial network $v_{\theta}$.
\While{Training}
\LComment{Sample batches of size $b$ \textit{i.i.d.} from the datasets}
\State $\vx_0 \sim q_0(\vx_0); \quad \vx_1 \sim q_1(\vx_1)$
\State $\pi_{2 \sigma^2} \gets \mathrm{Sinkhorn}(\vx_1, \vx_0, 2 \sigma^2)$
\State $(\vx_0, \vx_1) \sim \pi_{2 \sigma^2}$
\State $\vt \sim \mathcal{U}(0, 1)$
\State $\mu_t \gets \vt \vx_1 + (1 - \vt) \vx_0$
\State $\vx \sim \mathcal{N}(\mu_t, \sigma^2 \vt (1 - \vt) I)$
\State $\vu_t(\vx | \vz) \gets \frac{1-2\vt}{2\vt(1-\vt)}(\vx - ( \vt x_1 + (1-\vt)\vx_0) ) + (\vx_1 - \vx_0)$
\Comment{From~(\ref{eq:sbcfm:ut})}
\State $\LCFM(\theta) \gets \| v_\theta(\vt, \vx) - \vu_t(\vx | \vz)\|^2$
\State $\theta \gets \mathrm{Update}(\theta, \nabla_\theta \LCFM(\theta))$
\EndWhile
\State \Return $v_\theta$
\end{algorithmic}
\end{algorithm}

\section{Additional results}\label{sec:more_exp}

We start this section by the definition of the entropy regularized OT problem:
\begin{equation}\label{eq:sot}
W(q_0, q_1)^2_{2, \lambda} = \inf_{\pi_\lambda \in \Pi} \int_{\mathcal{X}^2} c(x, y)^2 \pi_\lambda(dx, dy) - \lambda H(\pi),
\end{equation}
where $\lambda \in \R^+$ and $H(\pi) = \int \ln \pi(x,y) d \pi(dx,dy)$.

\begin{figure}
     \centering
     \includegraphics[width=\columnwidth]{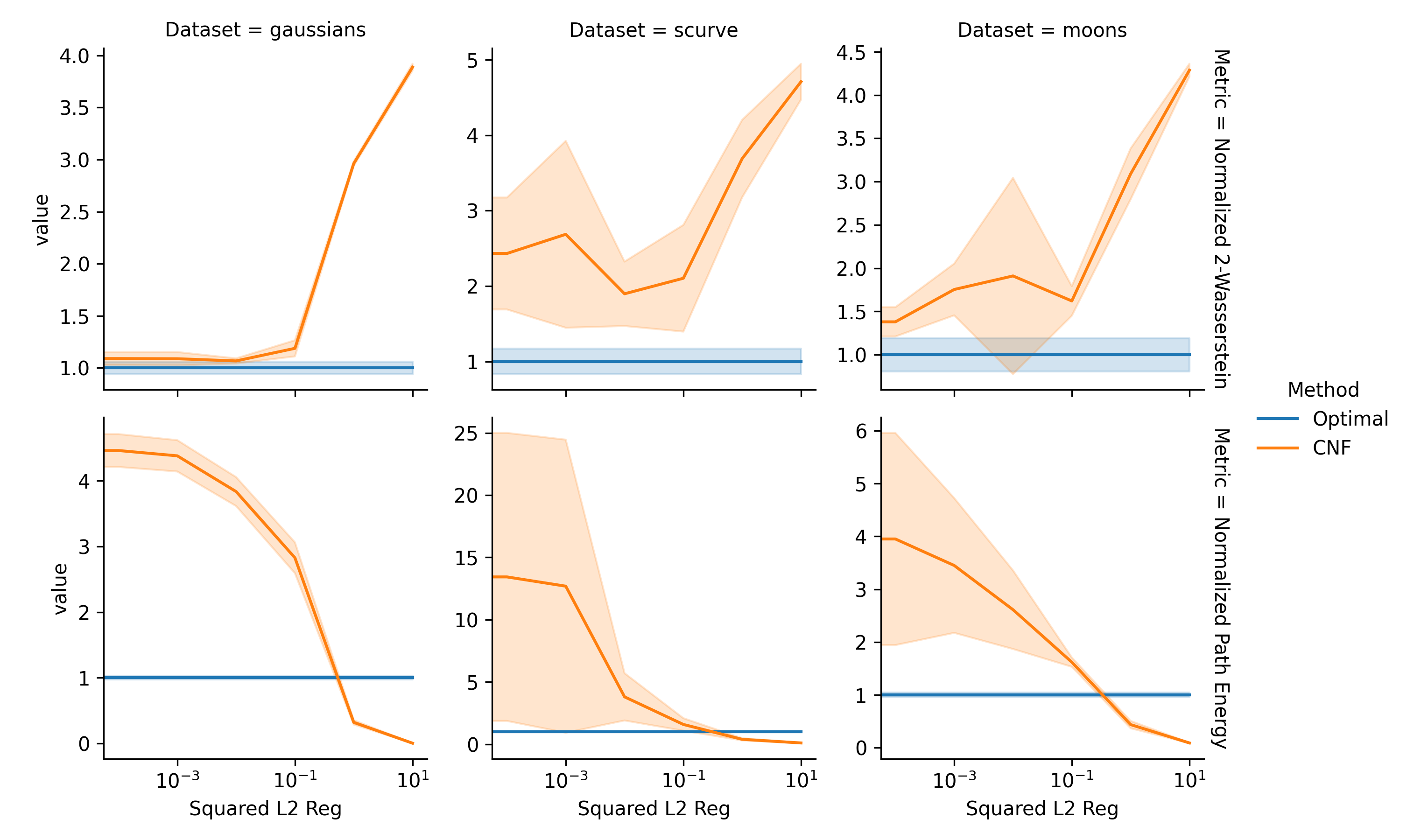}
     \vspace{-5mm}
    \caption{Evaluation of regularization strength of $\lambda_e$ over 6 seeds in the range [0, $10^{-5}$, $10^2$]. $\lambda_e=0.1$ performs the best in terms of minimizing path length and test error. We call this model "Regularized CNF".}
    \label{fig:cnf_reg}
\end{figure}

\paragraph{Regularized CNF tuning} Continuous normalizing flows with a path length penalty optimize a relaxed form of a dynamic optimal transport problem~\citep{tong_trajectorynet_2020,finlay_how_2020,onken_ot-flow_2021}. Where dynamic optimal transport solves for the optimal vector field in terms of average path length where the marginals at time $t=0$ and $t=1$ are constrained to equal two input marginals $q_0$ and $q_1$. Instead of this pair of hard constraints, regularized CNFs instead set $q_0 \coloneqq \mathcal{N}( x \mid 0, 1)$ and optimize a loss of the form
\begin{equation}
L(x(t)) = -\log p(x(t)) + \lambda_e \int_0^1 \|v_\theta(t, x(t))\|^2 dt
\end{equation}
where $\frac{dx}{dt} = v_\theta(t, x(t))$ and $\log p(x(T))$ is defined as 
\begin{equation}
\log p(x(T)) = p(x(0)) + \int_0^T \frac{\partial \log p(x(t))}{\partial t} dt = p(x(0)) + \int_0^T -\mathrm{tr}\left ( \frac{ d v_\theta}{d x(t)} \right ) dt 
\end{equation}
where the second equality follows from the instantaneous change of variables theorem \citep[Theorem 1]{chen_neural_2018}. In practice it is difficult to pick a $\lambda_e$ which both produces flows with short paths and allows the model to fit the data well. We analyze the effect of this parameter over three datasets in \autoref{fig:cnf_reg}. In this figure we analyze the Normalized 2-Wasserstein to the target distribution (which approaches 1 with good fit), and the Normalized Path Energy (NPE). We find a tradeoff between short paths (Low NPE) and good fit (Low 2-Wasserstein). We choose $\lambda_e = 0.1$ as a good tradeoff across datasets, which has paths that are not too much longer than optimal but also fits the data well. 

\paragraph{Ablation results on batch size.} Since we use Minibatch-OT for OT-CFM, when the minibatch size is equal to one, then OT-CFM is equivalent to CFM. This effect can be seen in \autoref{fig:cfm_batchsize}, where over four datasets, OT-CFM starts with equal path length and approximately equal 2-Wasserstein. Then the normalized path energy decreases surprisingly quickly plateauing after batchsize reaches $\sim$64. While the minibatch size needed to approximate the true dynamic optimal transport paths will vary with dataset (for example in the moon-8gaussian case we need a larger batch size) it is still somewhat surprising that such small batches are needed as this is less than 0.5\% of the entire 10k point dataset per batch.

\paragraph{The effect of $\sigma$ on fit and path length.} Next we consider $\sigma$, the bandwidth parameter of the Gaussian conditional probability path. In \autoref{fig:sigma_cfm} we study the effect of $\sigma$ on the fit (top) and the path energy (bottom). With $\sigma > 1$ methods start to underfit with high 2-Wasserstein error and either very long or very short paths. As for specific models, SB-CFM becomes unstable with $\sigma$ too small due to the lack of convergence for the static Sinkhorn optimization with small regularization. FM and CFM follow similar trends where they fit fairly well with $\sigma \le 1$ but have paths that are significantly longer than optimal by 2-3x. OT-CFM maintains near optimal path energies and near optimal fit until $\sigma > 1$. 

\paragraph{Schr\"odinger bridge fit over simulation time.} In \autoref{fig:sb_full} we compare the fit of Diffusion Schr\"odinger Bridge model with SB-CFM conditioned on time. The Diffusion Schr\"odinger Bridge seems to outperform SB-CFM early in the trajectory, however fails to fit the bridge after many integration steps. 

\begin{figure}
     \centering
     \includegraphics[width=\columnwidth]{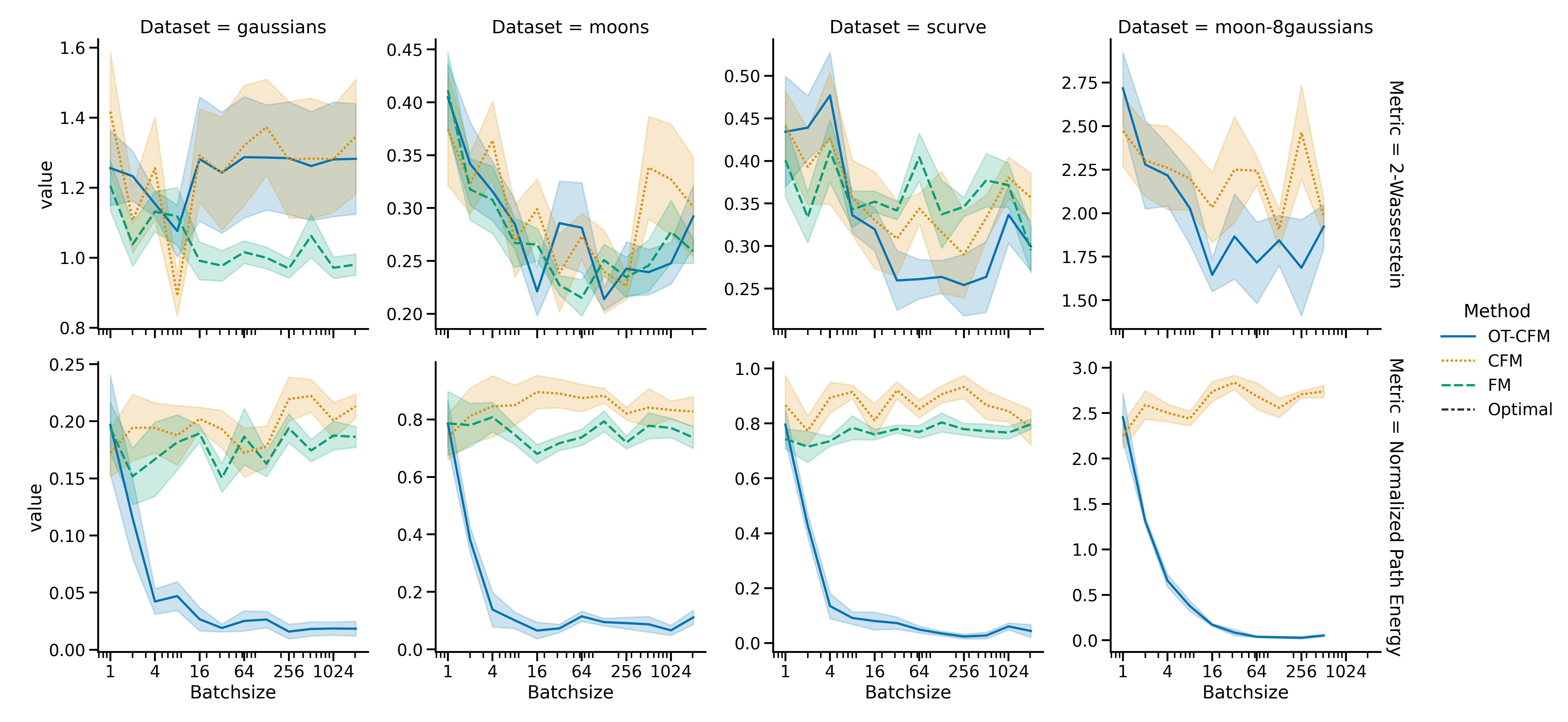}
     \vspace{-5mm}
    \caption{$\mu \pm \sigma$ of mean path length prediction error over 5 seeds. Lower is better. Introducing OT to CFM batches straightens paths lowering cost towards the optimal $W_2$ as compared to a standard random conditional flow matching network over all batch sizes.}
    \label{fig:cfm_batchsize}
\end{figure}

\begin{figure}
     \centering
     \includegraphics[width=\columnwidth]{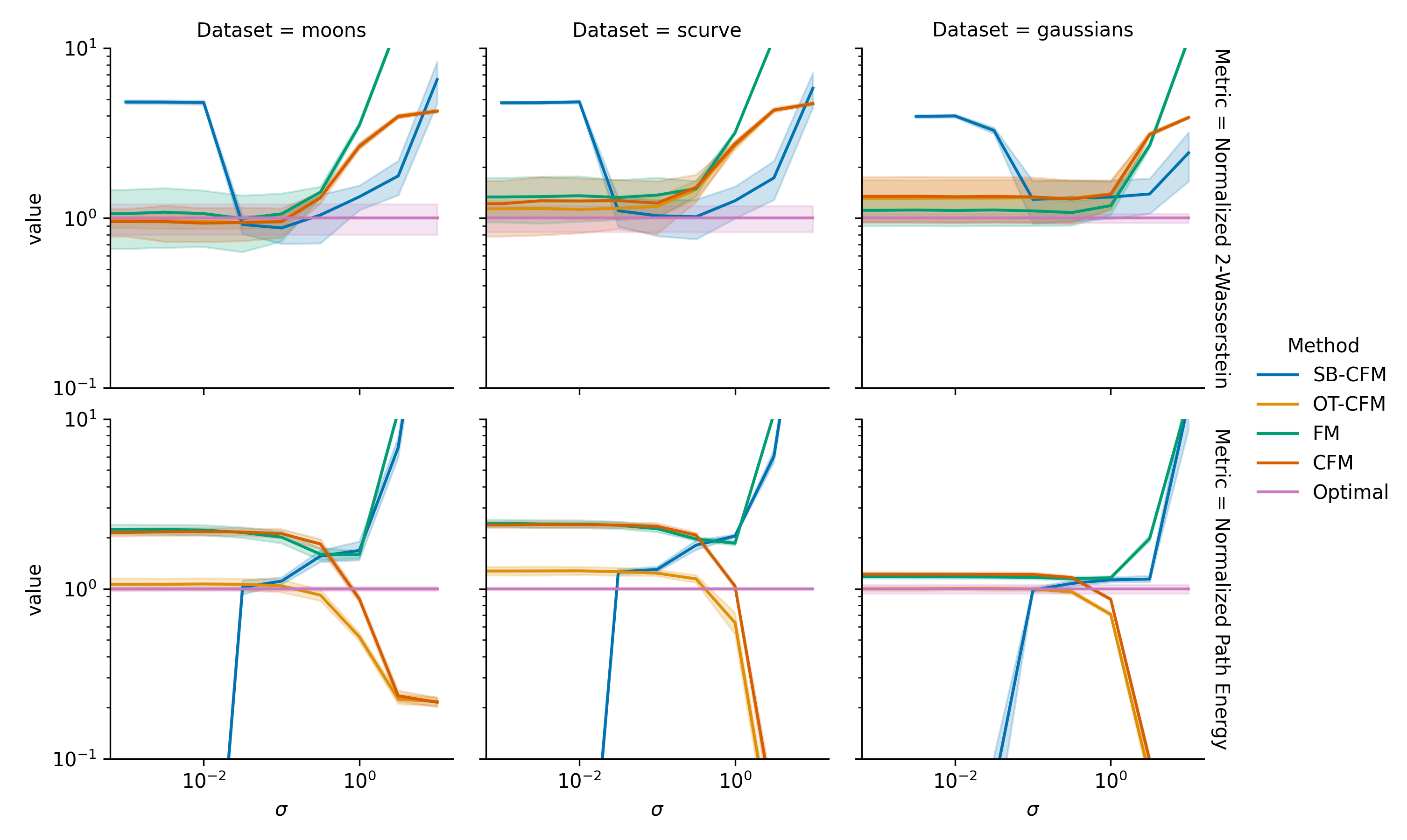}
    \caption{Evaluation of the effect of $\sigma$ for conditional flow matching models. When $\sigma < 1$ OT-CFM outperforms the other flow matching methods. SB-CFM drops off in performance when $\sigma$ is too small due to numerical issues in the discrete Sinkhorn solver.}
    \label{fig:sigma_cfm}
\end{figure}

\begin{figure}
     \centering
     \includegraphics[width=\columnwidth]{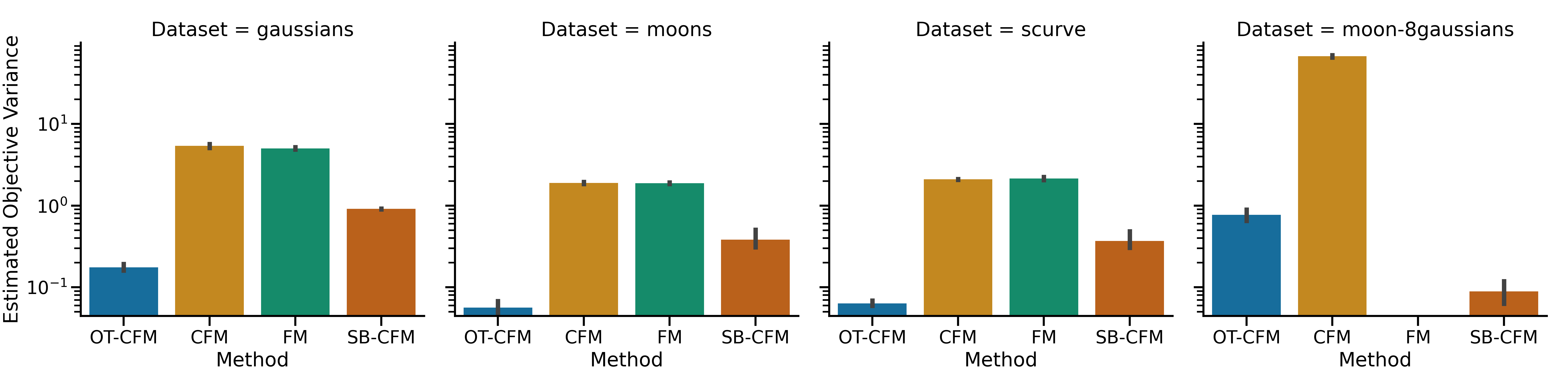}
     \vspace{-5mm}
    \caption{Estimated Objective Variance (\ref{eq:objective_variance}) for different methods with batch size 512, $\sigma=0.1$ across datasets. OT-CFM and SB-CFM have significantly lower objective variance than CFM and FM which have roughly equivalent objective variance.}
    \label{fig:2d_objective_variance}
\end{figure}

\begin{figure}
     \centering
     \includegraphics[width=0.5\columnwidth]{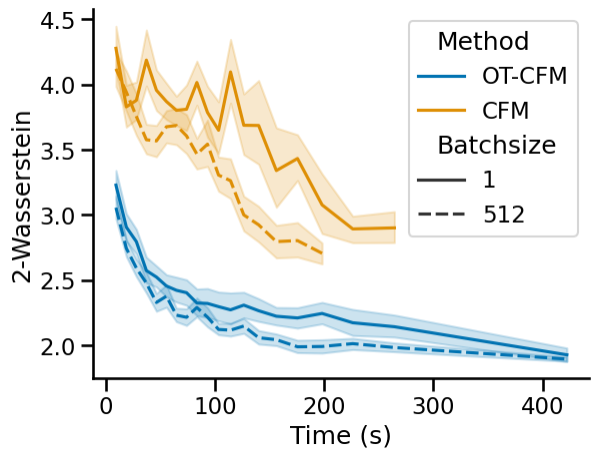}
    \caption{Validation 2-Wasserstein distance against training time with variance reduction by aggregation either with no aggregation (Batchsize 1) or aggregation over a minibatch (Batchsize 512). Variance reduction leads to faster training, especially for CFM where the objective variance is naturally larger than OT-CFM which sees a small performance gain.}
    \label{fig:var_reduce_walltime}
\end{figure}

\begin{figure}
     \centering
     \includegraphics[width=\columnwidth]{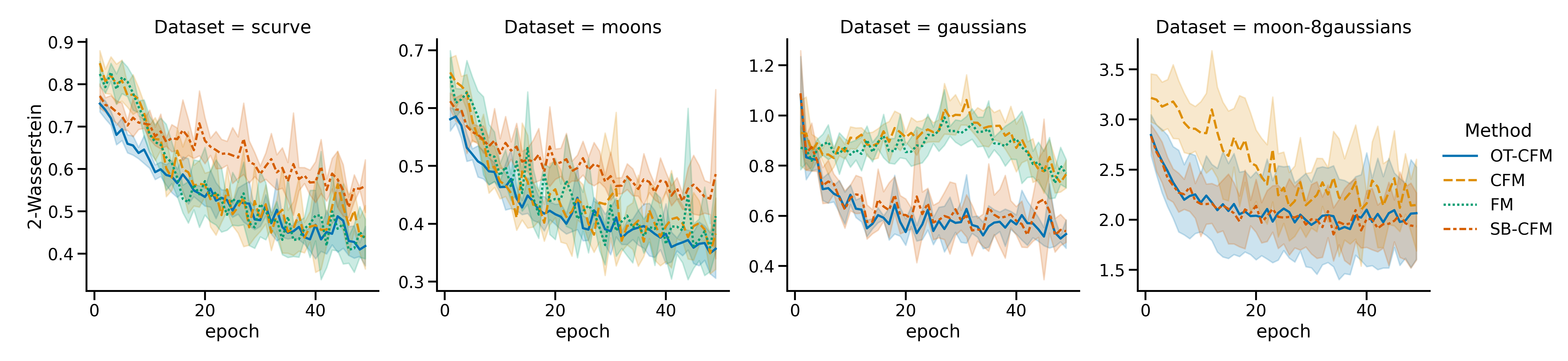}
     \vspace{-5mm}
    \caption{Extended results from \autoref{fig:lowdim} (left) over two more datasets. OT-CFM is still consistently the fastest converging method.}
    \label{fig:val_performance_full}
\end{figure}

\begin{figure}
     \centering
     \includegraphics[width=\columnwidth]{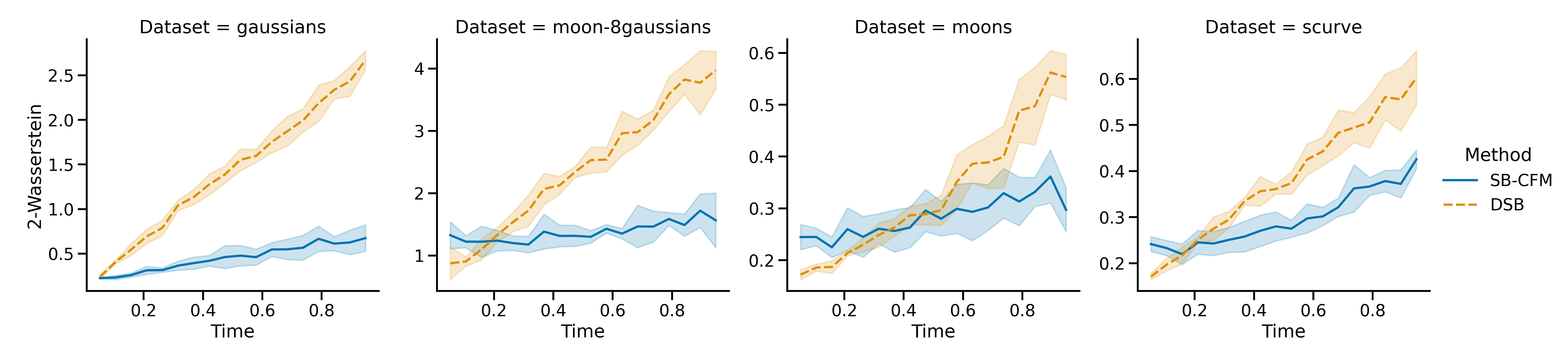}
     \vspace{-5mm}
    \caption{2-Wasserstein Error between trajectories and ground truth Schr\"odinger Bridge samples over simulation time. }
    \label{fig:sb_full}
\end{figure}

\begin{table}[t]
\caption{Mean training time till convergence in $10^3$ seconds over 5 seeds, with the exception of DSB, trained over 1 seed. CFM variants and DSB are trained on a single CPU with 5GB of memory where other baselines are given two CPUs and one GPU. CFM, with significantly fewer resources, still trains the fastest.}\vspace*{-1em}
\centering
\label{tab:time_comparison}
\resizebox{1\linewidth}{!}{
\begin{tabular}{llllll}
\toprule
 &                   ${\gN}\!{\rightarrow}$\ds{8gaussians} &             \ds{moons}$\rightarrow$\ds{8gaussians} &                       ${\gN}\!{\rightarrow}$\ds{moons} &                      ${\gN}\!{\rightarrow}$\ds{scurve}  &                        mean \\
\midrule
OT-CFM   &           1.284 $\pm$ 0.028 &           1.587 $\pm$ 0.204 &           1.464 $\pm$ 0.158 &           1.499 $\pm$ 0.157 &           1.484 $\pm$ 0.192 \\
CFM      &           0.993 $\pm$ 0.021 &           1.102 $\pm$ 0.171 &  \textbf{1.059 $\pm$ 0.158} &  \textbf{1.008 $\pm$ 0.106} &           1.046 $\pm$ 0.132 \\
FM       &           0.839 $\pm$ 0.096 &                         --- &           1.076 $\pm$ 0.126 &           1.127 $\pm$ 0.123 &           1.014 $\pm$ 0.170 \\
SB-CFM   &  \textbf{0.713 $\pm$ 0.386} &  \textbf{0.794 $\pm$ 0.293} &           1.143 $\pm$ 0.389 &           1.230 $\pm$ 0.424 &  \textbf{0.935 $\pm$ 0.397} \\
\midrule
Reg. CNF &           2.684 $\pm$ 0.052 &                         --- &           9.154 $\pm$ 1.535 &           9.022 $\pm$ 3.207 &           8.021 $\pm$ 3.288 \\
CNF      &           1.512 $\pm$ 0.234 &                         --- &          17.124 $\pm$ 4.398 &         27.416 $\pm$ 13.299 &         18.810 $\pm$ 12.677 \\
ICNN     &           3.712 $\pm$ 0.091 &           3.046 $\pm$ 0.496 &           2.558 $\pm$ 0.390 &           2.200 $\pm$ 0.034 &           2.912 $\pm$ 0.626 \\
DSB      &           5.418 $\pm$ --- &           5.682 $\pm$ --- &           5.428 $\pm$ --- &           5.560 $\pm$ --- &           5.522 $\pm$ --- \\
\bottomrule
\end{tabular}
}
\end{table}

\subsection{Objective variance.} \label{sec:exp_variance}
\begin{figure}[t]
     \centering
     \includegraphics[width=0.43\columnwidth]{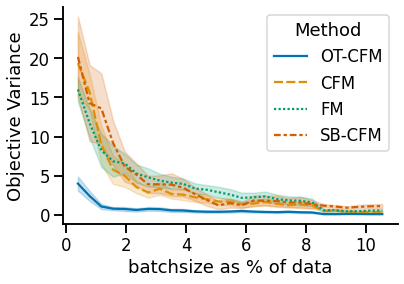}
     \includegraphics[width=0.43\columnwidth]{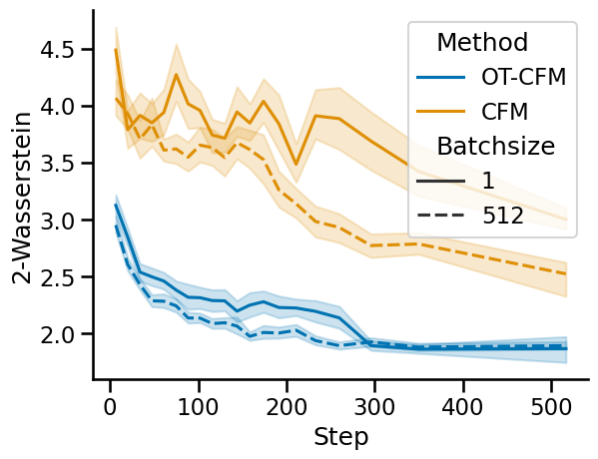}
      \vspace{-0.2cm}
    \caption{(left) Variance of the objective for varying batch size. OT-CFM has a lower variance across batch sizes. (right) Validation 2-Wasserstein performance with batch averaging as in \autoref{sec:vraab}. Reducing variance improves training efficiency.} 
    \label{fig:variance}
\end{figure}

We consider the variance of the objective $u_t(x | z)$ with respect to $z$. While for any $x$ we have $\E_q(z) u_t(x | z) = u_t(x)$, we find a lower second moment speeds up training. Specifically, we seek to understand the effect of the second moment which we call the \textit{objective variance} defined as 
\begin{equation}\label{eq:objective_variance}
OV = \E_{t \sim U(0,\mI), x \sim p_t(x), z \sim q(z)} \|u_t(x | z) - u_t(x)\|^2
\end{equation}
on training speed for different objectives in \autoref{tab:prob_path_summary}. We estimate the variance on a small data with a known $u_t(x)$. We examine this estimated objective variance and its effect on training convergence in \autoref{fig:variance}, showing that either OT-CFM or variance reduced CFM with averaging over the batch results in lower variance of the objective. This in turn leads to faster training times as shown on the right. Averaging over a batch of data leads to faster training particularly for methods with high objective variance (CFM) and less so for those with low (OT-CFM), which already trains quickly.

Variance in the conditional objective target $u_t(x | z)$ varies across models. In \autoref{fig:2d_objective_variance} we study the objective variance across CFM objective functions. Here we estimate the objective variance in (\ref{eq:objective_variance}) as 
\begin{equation}
\E_{x, t, z} \| u_t(x | z) - v_\theta(t, x) \|^2
\end{equation}
after training has converged. After training has converged $v_\theta$ should be very close to $u_t(x)$ so we use it as an empirical estimator of $u_t(x)$ to compute the variance. We find that across all datasets OT-CFM and SB-CFM have at least an order of magnitude lower variance than CFM and FM objectives. This correlates with faster training as measured by lower validation error in fewer steps for lower variance models as seen in \autoref{fig:lowdim} (left).

We examine the objective variance OV by conditioning $u_t(x | \vz)$ on a batch of pairs of data points, $\bar{\vz} \coloneqq \{z^i \coloneqq (x_0^i, x_1^i)\}_{i=1}^m$, we can reduce the variance of the OV objective to 0 for all models as batchsize goes to population size. For the batchsize $m$ range from 1 to the number of the population, we uniformly sample $m$ pairs of points $z^i$ and compute the probability $p_t(x| \bar{\vz})$ and the objective $u_t(x|\bar{\vz})$ from (\ref{eq:vrcfm:pt}) and (\ref{eq:vrcfm:ut}). 

We also find that averaging over batches makes the network acheive a lower validation error in fewer steps and in less walltime (\autoref{fig:var_reduce_walltime}).

\subsection{Energy-based CFM}
\label{sec:exp_ebm}

\begin{figure}[t]
     \centering
     \includegraphics[width=\columnwidth]{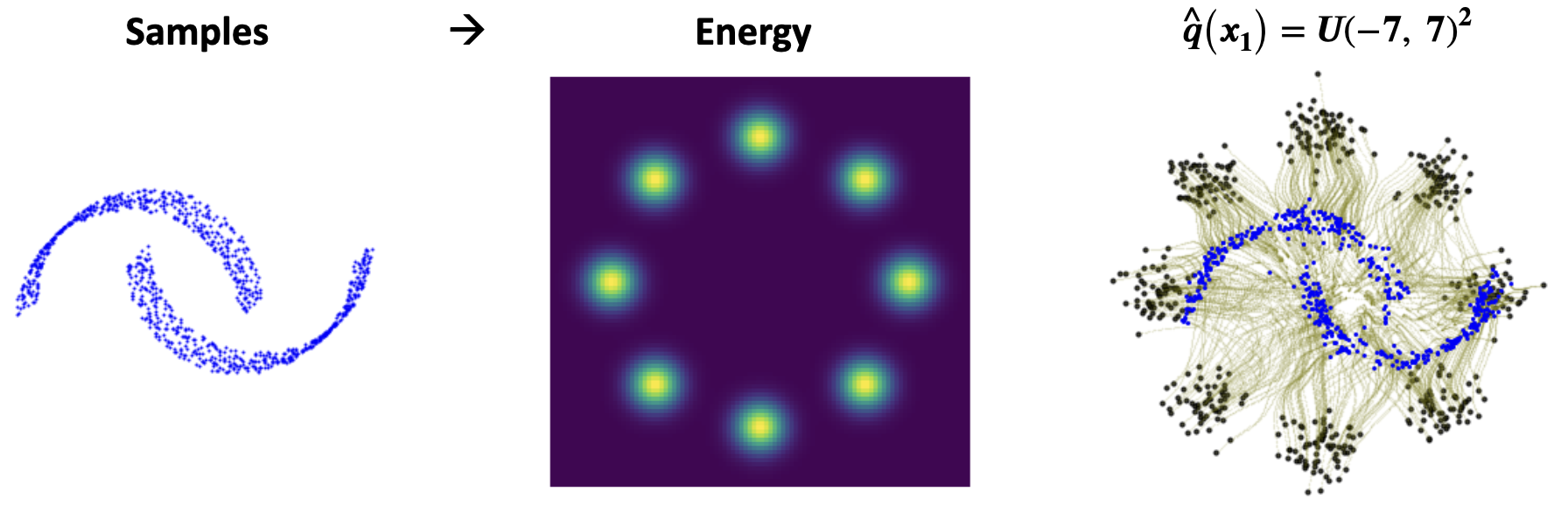}
     \vspace{-5mm}
    \caption{Flows (green) from (a) moons to (b) 8-Gaussians unnormalized density function learned using CFM with RWIS.}
    \label{fig:ecfm}
\end{figure}

We show how CFM and OT-CFM can be adapted to the case where we do not have access to samples from the target distribution, but only an unnormalized density (equivalently, energy function) of the target, $R(x_1)$ (\autoref{fig:ecfm}).
We consider the 10-dimensional funnel dataset from \citet{nuts}. We aim to learn a flow from the 10-dimensional standard Gaussian to the energy function of the funnel. We consider two algorithms, each of which has certain advantages: %
\begin{enumerate}[nosep,left=0pt,label=(\arabic*)]
\item Reweighted importance sampling (RWIS): We construct a \emph{weighted} batch of target points $x_1$ by sampling $x_1\sim\gN(\boldsymbol{0},{\rm\mathbf{I}})$ and assigning it a weight of $R(x_1)/\gN(x_1;\boldsymbol{0},{\rm\mathbf{I}})$ normalized to sum to 1 over the batch. The FM and CFM objectives handle weighted samples in a trivial way (by simply using the weights as $q(x_1)$ in \autoref{tab:prob_path_summary}), while OT-CFM treats the weights as target marginals in constructing the OT plan between $x_0$ and $x_1$. We expect RWIS to perform well when batches are large and the proposal and target distributions are sufficiently similar; otherwise, numerical explosion of the importance weights can hinder learning.
\item MCMC: We use samples from a long-run Metropolis-adjusted Langevin MCMC chain on the target density as approximate target samples. We expect this method to perform well when the MCMC mixes well; otherwise, modes of the target density may be missed.
\end{enumerate}

\begin{table}[t]
\caption{Energy-based CFM results on the 10-dimensional funnel dataset: log-partition function estimation bias (mean and standard deviation over 10 runs) and time to generate 6000 samples from the trained ODE. With adaptive integration, OT-CFM requires fewer function evaluations. With a fixed-interval solver, OT-CFM has lower discretization error, leading to a better estimate. PIS baseline is from \citet{zhang2022path}.}\vspace*{-1em}
\centering
{
\begin{tabular}{@{}lcccc}
\toprule
&\multicolumn{2}{c}{RWIS}&\multicolumn{2}{c}{MCMC}\\
\cmidrule(lr){2-3}\cmidrule(lr){4-5}
& $\log \hat Z$ & $\int$ time & $\log \hat Z$ & $\int$ time \\\midrule
&\multicolumn{4}{c}{\footnotesize\bf adaptive Dormand-Prince (tolerance $0.01$) integration} \\\cmidrule(lr){2-5}
CFM & $-0.068\pm0.041$ & $26.6\pm8.4$s & $0.029\pm0.037$ & $34.6\pm6.0$s \\
OT-CFM & $-0.076\pm0.098$ & ${\bf 13.3}\pm1.7$s & $0.009\pm0.045$ & ${\bf 12.8}\pm1.2$s \\
FM & $-0.033\pm0.057$ & $26.5\pm7.7$s & $0.027\pm0.031$ & $30.9\pm5.8$s \\
&\multicolumn{4}{c}{\footnotesize\bf Euler ($N=10$) integration} \\\cmidrule(lr){2-5}
CFM & $0.281\pm0.202$ & $4.0\pm0.8$s & $0.336\pm0.030$ & $3.7\pm0.7$s \\
OT-CFM & ${\bf -0.039}\pm0.030$ & $4.2\pm0.6$s & ${\bf 0.146}\pm0.107$ & $4.1\pm0.8$s \\
FM & $0.176\pm0.044$ & $4.1\pm0.7$s & $0.334\pm0.066$ & $3.9\pm0.6$s \\
&\multicolumn{4}{c}{\footnotesize\bf Euler-Maruyama ($N=100$) integration} \\\cmidrule(lr){2-5}
PIS (SDE) & \multicolumn{4}{c}{$-0.018\pm0.020$} \\
\bottomrule
\end{tabular}
}
\label{tab:ebm_results}
\end{table}

As an evaluation metric, we use the estimation bias of the log-partition function using a reweighted variational bound, following prior work that studied the problem using SDE modeling \citep{zhang2022path}. The computation of this metric for CNFs is given in \autoref{app:details_ebm}. 

The results are shown in \autoref{tab:ebm_results}. When an adaptive ODE integrator is used, all algorithms achieve similar results (no pair of mean log-partition function estimates is statistically distinguishable with $p<0.1$ under a Welch's $t$-test) but OT-CFM is about twice as efficient as CFM and FM. However, with a fixed computation budget for ODE integration, OT-CFM performs significantly better.

\section{Experiment and implementation details}
\label{app:details}

\subsection{Physical experimental setup}
All experiments were performed on a shared heterogenous high-performance-computing cluster. This cluster is primarily composed of GPU nodes with RTX8000, A100, and V100 Nvidia GPUs. Since the network and nodes are shared, other users may cause high variance in the training times of models. However, we believe that the striking difference between the convergence times in \autoref{tab:time_comparison} and combined with the CFM training setup with a single CPU and the baseline models trained with two CPUS and a GPU, paints a clear picture as to how efficient CFM training is. Qualitatively, we feel that most CFMs converge quite a bit more rapidly than these metrics would suggest, often converging to a near optimal validation performance in minutes. 

\subsection{2D, single-cell, and Schr\"odinger bridge experimental setup}

For all experiments we use the same architecture implemented in PyTorch~\citep{paszke_pytorch_2019}. We concatenate the flattened input $x \in \R^d$ and the time $t$ as the $d+1$ inputs to a network with three hidden layers of width $64$ interspersed with SELU activations~\citep{klambauer_self-normalizing_2017} followed by a linear output layer of width $d$. This forms our $v_\theta$ for all experiments. For all 2D and single-cell experiments we train for 1000 epochs and implement early stopping on the validation loss which checks the loss on a validation set every 10 epochs and stops training if there is no improvement for 30 epochs. We also set a time limit of 100 minutes for each CFM model. This is hit almost exclusively for SB-CFM models with small $\sigma$ which are unstable to train due to instabilities and non-convergence of the Sinkhorn~\citep{cuturi_sinkhorn_2013} transport plan optimization. We use the AdamW~\citep{loshchilov_decoupled_2019} optimizer with weight decay $10^{-5}$ with batchsize 512 by default in 2D experiments and 128 in the single cell datasets. For OT-CFM and SB-CFM we use exact linear programming EMD and Sinkhorn algorithms from the python optimal transport package~\citep{flamary_pot_2021} For evaluation of trajectories unless otherwise noted we use the Runge-Kutta45 (rk4) ODE solver with 101 timesteps from 0 to 1.

\subsection{Variance reduction by averaging}

We tackle the exploration of the effects of reducing variance of the target $u_t(x|z)$ from two directions. The first is for small example where we can compute the ground truth $u_t(x)$ quickly, and the second is in the setting of trained models where we can estimate $u_t(x)$ with $v_\theta(t,x)$ after $v_\theta$ has converged.

We first consider the convergence of each flow matching objective (OT-CFM, CFM, FM, SB-CFM) to zero as a function of the batch size relative to the dataset size. This is done by first sampling $t,x,z$ then computing the true objective variance across many samples. This appears in \autoref{fig:variance}.

We next consider the effect of averaging over a batch to reduce the variance of the objective in \autoref{fig:variance} (right). Here Batchsize refers to the size of the batch we are averaging over. We aggregate this into a single target so that the model sees a single $d$ dimensional target vector for one sampled $x,t$. This means that we can compare different aggregation sizes fairly.

\subsection{Schr\"odinger bridge evaluation setup}

To evaluate how well Schr\"odinger Bridge models actually model a Schr\"odinger Bridge, we constrain ourselves to a small example with 1000 points. We note that the closed-form Schr\"odinger marginals are known for discrete densities, for Gaussians~\citep{mallasto_entropy_2021}, and can be constructed for two approximate datasets~\citep{korotin_do_2021}, which present other ways of evaluating Schr\"odinger bridge performance. For any time $t$ we can sample from the ground truth Schr\"odinger bridge density $p_t(x)$ as
\begin{align*}
(x_0, x_1) &\sim \pi_{2 \sigma^2}\\
X_t &\sim \mathcal{N}(x \mid t x_1 + (1-t) x_0, \sigma t (1-t))
\end{align*}
We sample trajectories of length 20 from $t=0$ to $t=1$ by integrating over time from $t=0$ to $t=1$. At each of the 18 intermediate timepoints we compute the 2-Wasserstein distance between a sample of size 1000 from the trajectories at that time and the ground truth $X_t$ as above at that time. We reported the average across the 18 intermediate timepoints in \autoref{tab:sb_comparison} and plot the 2-Wasserstein distance over time in \autoref{fig:sb_full}. 

\paragraph{SB-CFM Model} We train SB-CFM with $\sigma=1$ and batchsize=512 for each of the datasets. We save 1000 trajectories from a test set integrated with the tsit5 solver with atol=rtol=1e-4.

\paragraph{Diffusion Schr\"odinger bridge model implementation details}

We use the implementation from~\citet{de_bortoli_diffusion_2021}. Only the networks were changed for a fair comparison with CFM. The forward and backward networks are composed of an MLP with three hidden layers of size 64, with SELU activations in between layers. We used a time and a positional encoders composed of two layers of size 16 and 32 with LeakyReLU activations has inputs to the score network. The architectures are the same for the 2D examples and the single-cell examples (except for the input dimension). During training, we set the variance ($\gamma$ in the author's code) to 0.001 and did 20 steps to discretize the Langevin dynamic. We trained for 10k iterations with 10k particles and batch size of 512, for 20 iterative proportional fitting steps, and a learning rate set to 0.0001. For the interpolation task we used the tenth timepoint from the Langevin dynamic with the backward network trained to go from the distribution at time $t-1$ to $t+1$. All trajectories are evaluated from the backward dynamic. We use $\sigma=1$ and batchsize=512.

\subsection{Single-cell experimental setup}

We strove to be consistent with the experimental setup of \citet{tong_trajectorynet_2020}. For the Embryoid body (EB) data, we use the same processed artifact which contains the first 100 principal components of the data. For our tests we truncate to the first five dimensions, then whiten (subtract mean and divide by standard deviation) each dimension. For the Embryoid body dataset which consists of 5 timepoints collected over 30 days we train separate models leaving out times $1,2,3$ in turn. We train a CFM over the full time scale (0-4). During testing we push forward all points $X_{t-1}$ to time $t$ as a distribution to test against.

For the Cite and Multi datasets these are sourced from the Multimodal Single-cell Integration challenge at NeurIPS 2022, a NeurIPS challenge hosted on Kaggle where the task was multi-modal prediction~\citep{burkhardt_multimodal_2022}. In this competition they used this data to investigate the predictability of RNA from chromatin accessibility and protein expression from RNA. Here, we repurpose this data for the task of time series interpolation. Both of these datasets consist of four timepoints from CD34+ hematopoietic stem and progenitor cells (HSPCs) collected on days 2, 3, 4, and 7. For more information and the raw data see the competition site.\footnote{\url{https://www.kaggle.com/competitions/open-problems-multimodal/data}} We preprocess this data slightly to remove patient specific effects by focusing on a single donor (donor 13176), then we again compute the first five principal components and again whiten each dimension to further normalize the data. 

\subsection{Energy-based CFM}
\label{app:details_ebm}

The 10-dimensional funnel dataset is defined by $\mathbf{x}_0\sim\gN(0,\mI)$, $\mathbf{x}_{1,\ldots,9}\sim\gN(\boldsymbol{0},\exp(\mathbf{x}_0){\rm \mathbf{I}})$. 
We attempted to mimic the SDE model architecture from \citet{zhang2022path} for the flow model $v_\theta(t,x)$. The time step $t$ is encoded with 128-dimensional Fourier features, then both $x$ and $t$ are independently processed with two-layer MLPs. The two representations are concatenated and processed through another three-layer MLP to make the prediction. All MLPs use GELU activation and have 128 units per hidden layer. We trained all models with $\sigma=0.05$ and learning rate $10^{-2}$, the highest at which they were table, for 1500 batches of size 300, to be consistent with the settings from \citet{zhang2022path}.

The importance-weighted estimate of the log-partition function is defined
\[\log\hat Z=\log\frac1K\sum_{i=1}^K\frac{R(x_1^{(i)})}{\gN(x_0^{(i)};0,\mathbf{I})}\left|\frac{\partial x_1}{\partial x_0}\right|_{x_0=x_0^{(i)}},\]
where $x_0^{(i)}$ are independent samples from the source distribution and $x_1^{(i)}$ is $x_0^{(i)}$ pushed forward by the flow (note that the Jacobian can be computed by differentiating the ODE integrator). We used $K=6000$ samples.

For MCMC, to be consistent with \citet{zhang2022path}, we generated 15000 samples, each of which was seen 30 times in training. We used 1000 steps of Metropolis-adjusted Langevin sampling with $\epsilon$ linearly decaying from 0.1 to 0.

The flow network used to generate \autoref{fig:ecfm} followed similar settings to those used in \autoref{sec:results_ot}.

\subsection{Unsupervised translation}
\label{app:details_vae}

We trained a vanilla convolutional VAE, with about 7 million parameters in the encoder, on CelebA faces scaled to $128\times128$ resolution.

For the flow network $v_\theta(t,x)$, we used a MLP with four hidden layers of 512 units and leaky ReLU activations taking the 129-dimensional concatenation of $x$ and $t$ as input. All models CFM and OT-CFM were trained for 5000 batches of size 256 and the Adam optimizer with learning rate $10^{-3}$. Integration was performed using the Dormand-Prince integrator with tolerance $10^{-3}$. For each attribute, 1000 positive and negative images each were used as a held-out test set.

\autoref{fig:vae_interpolation} shows some examples of the learned trajectories.

\begin{figure}[t]
\centering
\includegraphics[width=0.49\textwidth]{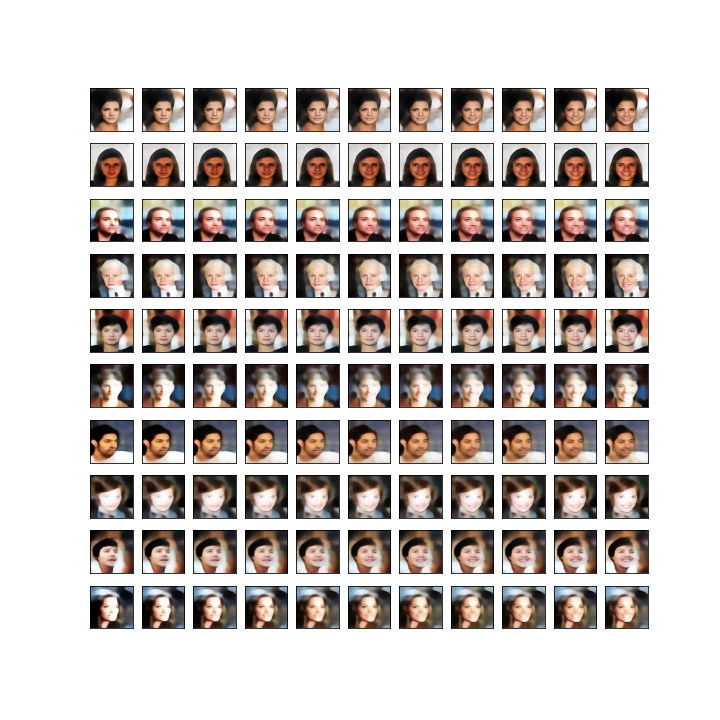}
\includegraphics[width=0.49\textwidth]{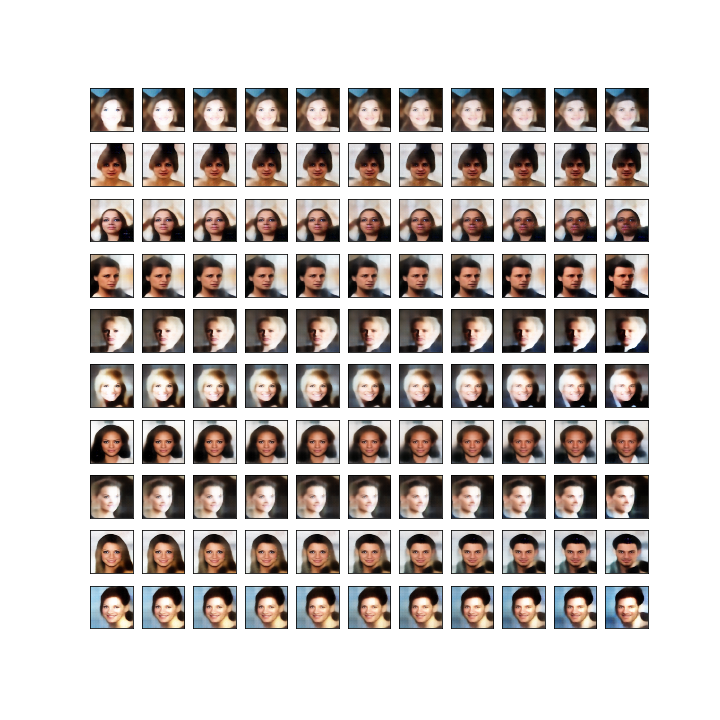}\\
\includegraphics[width=0.49\textwidth]{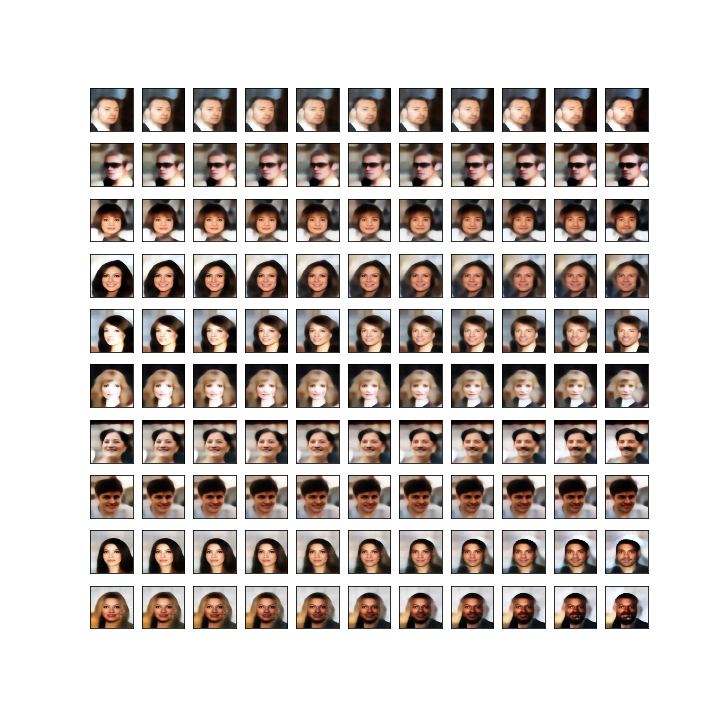}
\includegraphics[width=0.49\textwidth]{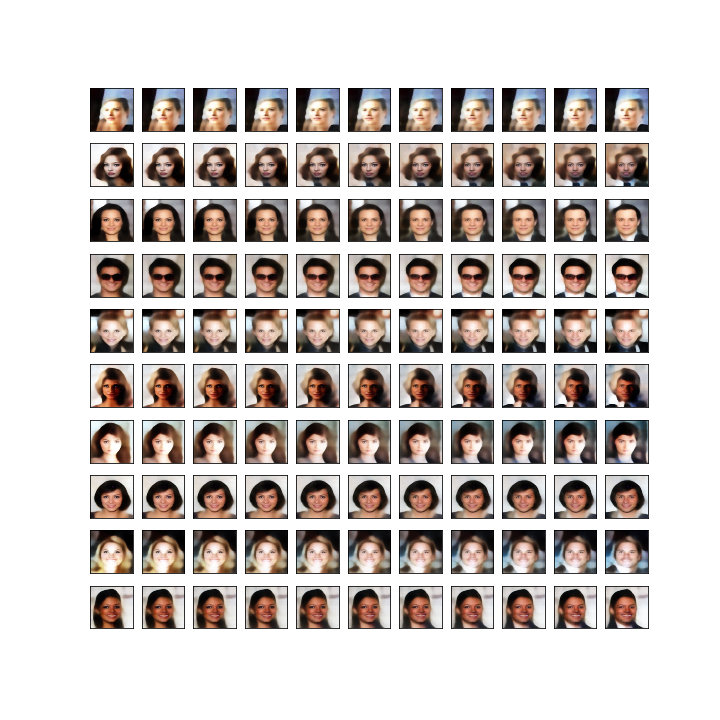}
\caption{Image-to-image translation in the latent space of CelebA images: An OT-CNF is trained to translate between latent encodings of images that are negative and positive for a given attribute. The first column is a reconstructed encoding $x_0$ of a real negative image. The next ten columns are decodings of images along the flow trajectory with initial condition $x_0$, with $x_1$ shown in the right column. \textbf{Top row:} not smiling $\to$ smiling, not male $\to$ male, showing the preservation of image structure and other attributes. \textbf{Bottom row:} no mustache $\to$ mustache, not wearing necktie $\to$ wearing necktie, showing partial failure modes. Both features are well-predicted by the latent vector, but infrequent in the dataset and highly correlated with other attributes, such as `male', leading to unpredictable behaviour for out-of-distribution samples and modification of attributes different from the target.}
\label{fig:vae_interpolation}
\end{figure}

\subsection{Unconditional CIFAR-10 experiments}
\label{app:cifar10}

For the CIFAR-10 experiments we followed the setup as described in \citet{lipman_flow_2022}. All methods were trained with the same setup, only differeing in the choice of probability path. Since code has not been released for this work, there are a few parameters which may differ. We summarize the setup here, where the exact parameter choices can be seen in the source code. 

We used the Adam optimizer with $\beta_1 = 0.9$, $\beta_2 = 0.999$, $\epsilon=10^{-8}$, and no weight decay. To reproduce \citet{lipman_flow_2022}, we used the UNet architecture from \citet{dhariwal_diffusion_2021} with channels $=256$, depth $=2$, channels multiple $=[1,2,2,2]$, heads $=4$, heads channels $=64$, attention resolution $=16$, dropout $=0.0$, batch size per gpu $= 128$, gpus $=2$, epochs $=2000$, maximum learning rate $= 5 \times 10^{-4}$, minimum learning rate $= 0$, with a learning schedule that increases linearly from the minimum to the maximum learning rate over the first $200$ epochs, and decays linearly from back to the minimum after that. We use$\sigma = 10^{-6}$. For sampling, we use Euler integration using the torchdyn package and \textsc{dopri5} from the torchdiffeq package. Since the \textsc{dopri5} solver is an adaptive step size solver, it uses a different number of steps for each integration. We use a batch size of 500 for a 100 total batches and average the number of function evaluations (NFE) over batches. 

For our improved models we use the same parameters as above except we use channels $=128$, dropout $=0.1$, a single A100 GPU, steps=$400000$, $\sigma = 0$, constant learning rate of $2 \times 10^{-4}$, gradient clipping with norm $=1.0$, and exponential moving average weights with decay $=0.9999$.

\end{document}